\newtheorem{theorem}{Theorem}
\newtheorem{corollary}{Corollary}
\newtheorem{proposition}{Proposition}
\begin{document}
\onehalfspacing
\begin{center}

\textbf{\sc \Large  A  General Hybrid Clustering Technique}

\vspace{0.3cm}

{Saeid Amiri$^{a,}$\footnote{Corresponding author: samiri2@unl.edu},~~Bertrand Clarke$^a$, ~~Jennifer Clarke$^a$ ~and~ Hoyt A. Koepke$^b$}

\vspace{0.3cm}

{\it  $^a$Department of Statistics, University of Nebraska-Lincoln, Lincoln, Nebraska, USA\\
$^b$GraphLab Inc.
Seattle, WA, USA}

\end{center}

\begin{abstract}
Here, we propose a clustering technique for general clustering 
problems including those that have non-convex clusters.  For a given desired number of clusters $K$, 
we use three stages to find a clustering.  The first stage uses a hybrid clustering technique
to produce a series of clusterings of various sizes (randomly selected).   They key steps
are to find a $K$-means clustering using $K_\ell$ clusters 
where $K_\ell \gg K$ and then joins these
small clusters by using single linkage clustering. 
The second stage stabilizes the result of stage one by reclustering via the 
`membership matrix' under Hamming distance to generate a dendrogram.  
The third stage is to cut the dendrogram to get $K^*$ clusters where $K^* \geq K$
and then prune back to $K$ to give a final clustering.  
A variant on our technique also gives a reasonable estimate for $K_T$, the true 
number of clusters.

We provide a series of arguments to justify the steps in the stages of our methods
and we provide numerous examples
involving real and simulated data to compare our technique with other related techniques.

\end{abstract}

\noindent {\bf Keywords}:  hybrid clustering, $K$-means, single linkage, non-convex clusters, stability

\section{Introduction}
\label{intro}

Clustering is an unsupervised technique used to find underlying structure in a dataset by grouping data points into subsets that are as homogeneous as possible.  Clustering has many applications in a wide range of fields.  No list of references can
be complete, however, three important recent references are  \cite{KaufmanandRousseeuw}, \cite{DudaHartandStork} and \cite{AggarwalandReddy}.   

Arguably, there are three main classes of clustering algorithm: Centroid-based, Hierarchical, and
Partitional.   Centroid-based refers to $K$-means and its variants.
Hierarchical comes in two main forms, divisive
and agglomerative. 
Partitional comes in three main forms graph-theoretic,
spectral, and model-based.  
Because the scope of clustering problems is so big, all of these procedures
have limitations.  
So, each major class of clustering procedures has its strengths and weakness even if,
in some cases, these are not mapped out very precisely.

The justification for the new method presented here is that it combines two classes
of methods (centroid-based and agglomerative hierarchical) with a careful treatment
of influential data points and (i) is not limited by convexity or (ii) as dependent 
on subjective choices of quantities such as dissimilarities.
That is, we combine several clustering techniques 
and principles in sequence so that one part of the technique may correct 
weaknesses in other parts giving a uniform improvement -- not necesarily decisively
better than other methods in particular cases, but rarely meaningfully outperformed.
In particular, we have not found any examples in which our clustering method
is outperformed to any meaningful extent.  The examples presented here dramatize
this since most of them are very diffucult for any clustering method. 
One consequence of this is that our procedure is well designed for non-convex 
clusterings as well convex ones.  

A further benefit of our clustering method is that we can give formal conditions
ensuring that the clustering will be correct for special cases.  That is, 
we prove a theorem ensuring that the basal sets cover the regions in a clustering
problem, in the limit of large sample size and use this to establish a corollary ensuring 
that the final clustering from our method, at least in simple cases, will be correct.
We also give formal results ensuring that the conditions of our main theorem can be
satisfied in some simple but general cases.   To the best of our knowledge, there are
no techniques, except for $K$-means, for which theoretical results such as ours 
can be established.

To fix notation, we assume $n$ independent and identical (IID) outcomes $x_i,~ i=1,\ldots,n$ 
of a random variable $X$.  The $x_i$'s are assumed $m$-dimensional and written as
$(x_{i1}, \dots , x_{im})$ when needed.   We denote a clustering of size $K$ by 
${\cal{C}}_K = (C_{K1}, \ldots , C_{KK})$; effectively we assume that for each $K$ only one
clustering will be generated.  

For a given $K$, we start by drawing a random
$K_b$, $b = 1, \ldots , B$ from a distribution that ensures a variety of reasonable
clustering sizes will be searched.  Then, the generic steps
are as follows.

\begin{enumerate}

\item {\it Hybrid clustering}: Create $K_\ell$ clusters by $K$-means.  Then use 
single linkage (SL) clustering to take unions of the $K_\ell$ cluster to get clusterings 
of size $K_b$.   

\item {\it Stabilization}: Repeat stage one $B$ times; the result is $B$ clusterings 
with sizes $K_1, \ldots , K_B$.
From these clusterings, form a pooled $n \times \sum_b K_b$ membership matrix $M$.   
Since each row of $M$ corresponds to an $x_i$
and is a vector of zeros and ones of length $\sum_b K_b$, the Hamming distance $H(x_i, x_j)$
between any two rows can be found and is between one and $B \sum_b K_b$.   
These Hamming distances
give a dissimilarity so that SL clustering can again be applied.

\item {\it Choosing a clustering}:  Use a `grow-and-prune' approach on the dendrogram
from Stage 2.   Cut the dendrogram at some dis-similarity value smaller than $H_K$, the value
of the dis-similarity that gives $K$ clusters.  
The $K^*$
clusters are then merged to form $K$ clusters after ignoring any clusters that are
too small.

\end{enumerate} 

\noindent
 
 The last stage involves possibly two reclusterings:  One to merge the larger clusters
 down to $K$ clusters and a second to merge the small clusters into these
 $K$ clusters.  The definition of small clusters requires the use 
of a cutoff value $\alpha$, here taken to be 0.05; details are in Sec.  \ref{presentation}.

In Step 1), there is a range of choices for
dis-similarity to be used in the SL clustering.
The usual dis-similarity, namely, defining the minimum distance between
two sets the be the shortest path length connecting them, is one valid
choice.  As seen below, however, it is most effective when the 
data are generated from a probability measure $P$ that has disjoint closed components
each the closure of an open set.
When the components of $P$ are not disjoint, we have found it
advantageous to use a robustified form of the minimum distance between
two sets, namely the 20th percentile of the distances between
the points in the two sets.  This is discussed further in Subsec.
\ref{SLmerging1}

There are precedents for the kind of hybrid clustering described in stages 1) and 2)
that combines two or
more distinct clustering techniques.  Perhaps the closest is \cite{FredandJain} 
Their central idea is to create many clusterings of different sizes (by $K$-means) that can be pooled
via a `co-association matrix' that weights points in each clustering according their
membership.   This matrix can then be modified (take one minus each entry) to give a
dis-similarity so that single-linkage clustering can be used to give a final clustering.  
\cite{FredandJain} refer to this as evidence accumulation clustering
({\sf EAC}) because they are pooling information over a range of clusterings.  
{\sf EAC} differs meaningfully from our technique in three ways.  First, in our technique we
choose a single $K_\ell$ while {\sf EAC} uses a range of cluster sizes.   Second, we ensemble (and hence stabilize)
directly by membership in terms of Hamming distance whereas {\sf EAC} ensembles by a co-association. 
Third, our procdure uses
an extra step of growing and pruning a dendrogram (see Step 5 in Algoirthm \#1)
that is akin to an optimization over `main' clusters.
Our `fine-tuning' of their technique
seems to give better results.

Another technique that is conceptually similar to ours is due to Chipman and Tibshirani \cite{ChipmanandTibshirani},
hereafter {\sf CT}.
First, in a `bottom-up stage', small sets of points that are not to be separated are replaced 
by their centroids.  Then, in a `top-down stage' the remaining points are clustered divisively
to give big clusters. Then, the bottom up and top-down stages are reconciled
to give a final clustering.  Our proposed technique differs from {\sf CT} in four key ways.  First, we use $K$-means in place of {\sf CT}'s `mutual clusters'.  Second, we use single linkage where {\sf CT} uses average linkage. 
Third, our technique has a stabilization stage.
Fourth,
our technique uses a `grow and prune' strategy, unlike {\sf CT}.  So,it is 
unclear how well
{\sf CT} performs when the true clusters are non-convex.

A third technique, conceptually related to ours but nevertheless very different, is due to
Karypis et al. (\cite{KarypisHanandKumar}).  This technique, often called
{\sf CHAMELEON}, rests on a graph theoretic analysis of the clustering problem
and uses two passes over the data.  The first is a graph partitioning based
algorithm to divide the data set into a collection of small clusters.  The second pass is an agglomerative
hierarchical clustering based on connectivity (a graph-theoretic concept) to combine these clusters.
Our method differs from \cite{KarypisHanandKumar}  in four key ways.
First, we use $K$-means instead of graph partitioning.  Second, we simply use single linkage
whereas \cite{KarypisHanandKumar}  combines small clusters based on both closeness and relative
interconnectivity.  Third, our technique has a stabilization stage to manage cluster boundary 
uncertainty.
Fourth, our technique explicitly uses a `grow and prune' strategy permitting a `look ahead' to
more clusters than necessary.
By contrast, {\sf CHAMELEON} has an elaborate optimization.  On the other hand,
both can find
non-convex clusters.

To the best of our knowledge, the earliest explicit proposal for hybrid methods is 
in  \cite{ZhongandGhosh} who observed that using $K$-means with $K$ too
large and single linkage may enable a technique to find nonconvex clusters.  

In addition to proposing a new hybrid clustering technique (Algorithm \#1) we present 
a way to estimate the correct value $K_T$ of $K$ in Algorithm \#2.   
Essentially, we combine the first three steps of algorithm \#1 with a modification of \cite{FredandJain}.   



The rest of this paper is organized as follow.  In Sec. \ref{presentation} 
we present our two algorithms for clustering and estimating $K_T$.  
In Sec. \ref{justification} we provide justifications for some of the
steps in our algorithms.  For the steps where we are unable to provide theory, we 
provide methodological interpretations as a motivation for their use. 
In Sec. \ref{comparisons}  we present our numerical comparisons.
Our concluding remarks are
in Sec. \ref{conclusions}.

\section{Presentation of techniques}
\label{presentation}

We begin with Algorithm \#1 that
formalizes our generation of clusterings.   It has five steps and
five inputs:  the number  $K$ of clusters to be in the final clustering,
a number $K_{max}$ to be the largest number of clusters that we would consider
reasonable, a number $B$ of iterations of our initial hybrid clustering technique,
a number $K_\ell \gg K$ of smaller clusters that will be concatenated to
larger clusters, and a value $\alpha$ to serve as a cutoff for the size of a
cluster as measured by the proportion of how many of the $K_\ell$
clusters had to  be combined to create it.  In practice, setting 
$K_\ell = \lfloor n/5 \rfloor $ worked reasonably well; however, $\lfloor n/5 \rfloor$
is an arbitrary choice and we found that adding a layer of variability by choosing $K_\ell$ 
according to a $DUnif[ \lfloor n/4 \rfloor ,  \lfloor n/6 \rfloor]$ gave improved results.
Separately, we also found that larger values
of $K_{max}$ seemed to require larger values of $B$ to get good results.
We address the choice of $B$ and $K_{max}$ later in Sec. \ref{comparisons}.
In our work here, we merely set $\alpha = .05$.  This ensured that
we got at least $K$ clusters in our examples.  Loosely, the more outliers or 
clusters there are, the smaller one should choose $\alpha$.  
So, effectively, given Algorithm \#1, only $K$ must be specified.
The specification of $K$ is done separately in Algorithm \#2.


We begin with our clustering algorithm given in the column to the right.

\begin{center}
\begin{algorithm}
\caption{Stablized Hybrid Clustering  ({\sf SHC}) }
\label{alg03}

\begin{algorithmic}[1]
\begin{enumerate}

\item Given $K$, start by drawing a value of $K_\ell$ and then drawing a value of $K_b \sim DUnif(2,K_{max})$ where
$K_{max} < K_\ell$, for $b=1, \ldots, B$.
For each $K_b$, do the following with randomly generated initial conditions
to obtain  ${\cal{C}}(K_b) = \{C_{b1}, \ldots , C_{b K_b} \}$:
\begin{itemize}

\item  Use standard $K$-means clustering (or any partitional technique) to generate a clustering of size $K_\ell$ `basal' clusters.

\item Next, use single linkage clustering (or any agglomerative technique) to merge the $K_\ell$ basal clusters to get a
clustering ${\cal{C}}_{K_b}$.

\end{itemize}

\item  For ${\cal{C}}(K_1)$, let
$M_1 = (\chi(s, t))_{s=1, \ldots, n; t = 1 , \ldots , K_1}$ be the $n \times K_1$ membership 
matrix with entries 
$
\chi(s,t) = 
\begin{cases}
1 & x_s \in C_{K_1,t} \\
0 &  x_s \notin C_{K_1,t} .
\end{cases}
$
Doing the same for the rest of the ${\cal{C}}(K_b)$'s generates 
membership matrices $M_1, \ldots , M_B$ for clusterings ${\cal{C}}(K_2), \ldots , {\cal{C}}(K_B)$,
respectively.  Concatenating $M_b$'s gives the overall membership matrix 
$M(B) = [M_1, \ldots , M_B]$.

\item From the $n \times \sum_b K_b$ overall membership matrix $M(B)$ we construct a 
dissimilarity matrix using Hamming distance.  Let $S= \sum_b K_b$.
That is, the $i$-th and $j$-th rows in $M(B)$ are of the form 
$x_i = (x_{i,1}, \ldots , x_{i, S})$ and $x_j = (x_{j,1}, \ldots , x_{j, S})$ and so give 
dis-similarities 
$
d_{ij} = d(x_i, x_j) = \sum_{m=1}^{S}  {\sf I}(x_{im}, x_{jm})
$
where ${\sf I}(x_{im}, x_{jm}) = 1$ if $x_{im} \neq x_{jm}$ and zero otherwise.  So, $d_{ij}$
is the number of entries in $x_i$ and $x_j$ that are different
 and $0 \leq d_{ij}\leq S$.
Let $D = (d_{ij} )$ be the resulting matrix.

\item  Given $D$, use SL clustering to generate a vertical dendrogram with leaves
at the bottom and dis-similarity values on the $y$-axis.
Since $K$ is given, it corresponds to a dis-similarity value $H_K$ on the vertical axis,
namely, $H_K$ is the maximum dis-similarity associated with $K$ clusters.
Now, there will be $K$ lines or branches on the dendrogram that cross $H_K$.  
Let the lengths of these lines from $H_K$ down to the next split be denoted $h_1, \ldots , h_K$.
Cut the dendrogram at $H_K+\bar{h}$ and let $K^*$ be the number of clusters 
at that 
value.

\item Write $K^* = K+v$ with $v  \geq 0$.   If $v=0$, the clustering
from Step 4 of size $K$ is the final clustering.  If $v \geq 1$,  
write $v= v_1 + v_2$ where $v_2$ is the number of clusters 
in the clustering ${\cal{C}}_{K^*}$ for which $\#(C_{K^* j} )/n \leq \alpha$.
In the case that $K$ clusters of size at least $\alpha$ do not exist, $\alpha$ is
adjusted downward until $K$ such clusters exist.
Ignore these $v_2$ clusters and using SL (under the corresponding submatrix
of $D$) recluster the points in the remaining $(K+v_1)$ clusters to reduce them
to $K$ `main' clusters. Then, use SL clustering again to assign the points
in the $v_2$ clusters to the $K$ `main' clusters to give the final clustering
of size $K$.
 
\end{enumerate}
\end{algorithmic}
\end{algorithm}
\end{center}
\hspace{1cm}

For brevity, we refer to Algorithm \#1 as {\sf SHC}.

In {\sf SHC}, we have specified the use $K$-means in the first part of Step
1 but left open which dissimilarity to use in Step 2.  This is intentional
because we can establish theory for our method that suggests the usual
minimum distance dissimilarity is best when the components of $P$
are separated (convex or not); however, a dissimilarity between sets
based on 20th percentile of the distance between their points works
better when the separation is not clear or entirely absent.  In
our examples below we denote these dissimilarities
by writing {\sf SHCm} (minimal)  and {\sf SHC20} (20th percentile).

Note that the number of clusters $K^*$ is defined internally to the algorithm in
 Step 4.    The idea is to get a tree that is slightly larger than cutting at $H_K$,
i.e., to let the algorithm search an extra few steps ahead for good clusters.  In Step 
5, any extra clusters that are found 
but not helpful are pruned away.   The intuition behind the choice of $K^*$ is that
the level of the dissimilarity it represents identifies the point at which chaining 
begins to affect the clustering procedure negatively.   


Algorithm \#1 can serve as the basis for another algorithm to estimate $K_T$.
We add an  extra step derived from the method for choosing $K$
in \cite{FredandJain}.  Recall that  \cite{FredandJain}  considered a set of 
`lifetimes' that were lengths in terms of the dissimilarity.  These were the 
distances between the values on the vertical axis at which one could cut a 
dendrogram so as to get a collection of clusters with
the property that at least one of the clusters emerges precisely at the value
on the vertical axis at which the horizontal line was drawn.  
\cite{FredandJain} then cut the dendrogram at the dis-similarity that
corresponds to the maximum of these 
vertical distances to choose the number of clusters.   Algorithm \#2 extends
this method by using it once, removing some clusters, and then using it again.

Our general procedure is given in Algorithm \#\ref{alg4}, next page.

\begin{center}
\begin{algorithm}
\caption{Estimate of $K_T$ ({\sf EK})}
\label{alg4}
\begin{algorithmic}[1]
\begin{enumerate}

\item  Use Steps 1-3 from Algorithm \#1 to obtain $D$.

\item Form the dendrogram for the data under $D$ using SL.

\item Use the  \cite{FredandJain} technique to find the two
largest lifetimes.

\item For each of the two  largest lifetimes, cut the dendrogram at that
lifetime and examine the size of the clusters. 
Remove clusters that are both small 
(containing less that $\alpha$100\% of the data) and
split off at or just below $H_K$.    This gives two sub-dendrograms, one
for each lifetime.

\item For each of the sub-dendrograms, cut at $H_K$.  This gives
two numbers of clusters.   Take the mean of these two numbers
of clusters as the estimate of the correct number of clusters.

\end{enumerate}
\end{algorithmic}
\end{algorithm}
\end{center}

For brevity, we refer to Algorithm \#2 as {\sf EK}.


\section{Justification}
\label{justification}

In this section we provide motivation, interpretation, and properties of the steps in the two algorithms we have proposed.  

\subsection{$K$-means with large $K_\ell$}
\label{largeKell}

Let the probability measure $P$ have density $p$ and assume
that $p$ only takes values zero and a single, fixed constant.  The places where $p$
assumes a nonzero value are the clusters of $P$.  Our first result
shows that the support of $p$ can be expressed
as a disjoint union of small clusters in the limit of large $n$.  Let
$A \bigtriangleup B$ denote the symmetric difference between sets $A$ and $B$
and for any set $A$,
let ${\sf diam}(A) = \sup_{x,y \in A} d(x, y)$ be the diameter of $A$.  Now, given data 
$x^n = \{ x_1, \ldots, x_n \}$ write 
$\hat{\cal{C}}_K = ( \hat{C}_{K1}, \ldots \hat{C}_{KK} )$
to be a clustering of $x^n$ into $K$ clusters.  
Our result
is the following.

\begin{theorem}
Suppose the following assumptions are satisfied:
\begin{enumerate}

\item $\forall K, m ~ \exists ~C_{Km}$ so that
$$
P( \hat{C}_{Km} \bigtriangleup C_{Km} ) \rightarrow 0
$$
in $P$-probability as $n \rightarrow \infty$.  

\item For any $m$,
$$
\sup_{m=1, \dots , K} {\sf diam}(C_{Km}) \rightarrow 0
$$
as $K \rightarrow \infty$.

\item For each $m$ and $K$, $P(C_{Km}) > 0$.

\item The support of $p$, ${\sf supp}(P)$, consists of finitely
many disjoint open sets with disjoint closures having
smooth boundaries.

\item The random variable $X$ generating $x^n$ is bounded.

\end{enumerate}

Then for any fixed $m$, along any sequence of sets $C_{Km}$ with $P(diam(C_{Km}))>0$, there is a $z \in \overline{\sf Supp}(P)$, the support of $P$, so that
 \begin{eqnarray}
 E(X|\widehat C_{Km}) \longrightarrow z ,
 \end{eqnarray}  
 as $n$ increases first and $K$ increases second, at suitable rates.

\end{theorem}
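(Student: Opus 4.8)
The plan is to decouple the two limits hidden in the phrase ``as $n$ increases first and $K$ increases second'': establish the inner limit $n\to\infty$ with $K$ held fixed, then the outer limit $K\to\infty$ on the population quantity, and finally stitch them together with a rate condition $K=K(n)$. Throughout I would work with the centroid written as a ratio,
$$
E(X\mid \hat C_{Km}) \;=\; \frac{1}{P(\hat C_{Km})}\int_{\hat C_{Km}} x\, dP(x),
$$
and with the analogous expression for the population cluster $C_{Km}$, so that the two limits can be attacked through the numerator and denominator separately.

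First I would handle the inner limit, showing $E(X\mid \hat C_{Km}) - E(X\mid C_{Km})\to 0$ in $P$-probability for each fixed $K,m$. The two centroids differ only through the region $\hat C_{Km}\bigtriangleup C_{Km}$, so the numerators differ by at most $\|X\|_\infty\,P(\hat C_{Km}\bigtriangleup C_{Km})$, using the boundedness in Assumption~5, and the denominators differ by at most $P(\hat C_{Km}\bigtriangleup C_{Km})$. Assumption~1 drives this symmetric difference to zero in probability, while Assumption~3 keeps $P(C_{Km})$ bounded away from $0$ so the ratio stays stable; a short Slutsky-type argument then closes the step. I expect this part to be essentially routine once the three assumptions are in hand.

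Next comes the outer limit. The population centroid $E(X\mid C_{Km})$ always lies in the closed convex hull of $C_{Km}$, hence within distance ${\sf diam}(C_{Km})$ of the set itself. Assumption~2 forces ${\sf diam}(C_{Km})\to 0$, so the centroids are asymptotically pinned to the shrinking clusters. Boundedness (Assumption~5) confines every centroid to a fixed compact set, so limit points exist; and since the mass of each $C_{Km}$ sits in ${\sf supp}(P)$, each centroid lies within ${\sf diam}(C_{Km})\to 0$ of ${\sf supp}(P)$, placing every limit point in $\overline{\sf Supp}(P)$. Assumption~4, giving finitely many components with disjoint closures, is then used to trap the tail of the sequence inside a single component, so that the vanishing diameter pins the centroids to one location $z$. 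This delivers $E(X\mid C_{Km})\to z$ with $z\in\overline{\sf Supp}(P)$.

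Finally I would combine the two through a diagonal argument: for each $n$ choose $K=K(n)\to\infty$ slowly enough that the in-probability bound from the inner step survives (its size depends on $K$ through $P(C_{Km})$ and the rate in Assumption~1), which is exactly what ``at suitable rates'' should mean. The hard part will be the outer limit, namely upgrading ``every subsequential limit point lies in $\overline{\sf Supp}(P)$'' to genuine convergence to a single $z$: since the $m$-th cluster of the $K$-cluster solution may refer to different regions as $K$ varies, the sequence of population clusters could in principle drift among components, and Assumption~4 must be invoked carefully to confine the tail to one component so that the shrinking diameter actually forces a unique limit. Coordinating the two rates, so that the inner in-probability statement does not degrade as $K$ grows, is the secondary technical point.
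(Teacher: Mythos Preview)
Your plan matches the paper's proof in structure: the same two-step split into (i) $E(X\mid\hat C_{Km})-E(X\mid C_{Km})\to 0$ in probability via Assumptions~1, 3, 5, and (ii) the population centroid being pinned to a point of $\overline{\mathsf{Supp}(P)}$ via the vanishing diameter. The one substantive difference is in how membership $z\in\overline{\mathsf{Supp}(P)}$ is argued. You do it directly: since $P(C_{Km})>0$ the set meets $\mathsf{supp}(P)$, so the centroid lies within $\mathsf{diam}(C_{Km})$ of $\mathsf{supp}(P)$, and letting $K\to\infty$ places any limit in the closure. The paper instead argues by contradiction: if $z\notin\overline{\mathsf{Supp}(P)}$, take $\epsilon>0$ with $B(z,\epsilon)\subset\overline{\mathsf{Supp}(P)}^c$; since $\mathsf{diam}(C_{Km})\to 0$ eventually $C_{Km}\subset B(z,\epsilon)$, forcing $P(C_{Km})=0$ and contradicting Assumption~3. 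Both are short; yours is a touch more constructive and avoids the separate case analysis. You are also right to flag the step from ``$\mathsf{diam}(C_{Km})\to 0$'' to ``$C_{Km}\to\{z\}$ for a single $z$'' as the delicate point---the paper simply asserts it without further justification, so your caution there is warranted rather than a gap in your plan.
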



\begin{proof} 
Consider a sequence $\langle \hat{C}_{Km} \rangle \mid_{K=1}^\infty$
for which $P(C_{Km}) > 0$; this is possible by items 1) and 3).  By item 2), $P(C_{Km}) \rightarrow 0^+$.

Step 1:  For such a sequence, 
  \[
E(X|\widehat C_{Km})-E(X| C_{Km})  \overset{P}{\longrightarrow} 0:
  \]

Begin by writing 
   \begin{eqnarray}
&& E(X|\widehat C_{Km})-E(X| C_{Km}) \nonumber \\ 
&=& 
\int_{\widehat{C}_{Km}} \frac{X dP}{P(\widehat{C}_{Km})}
-
\int_{C_{Km}} \frac{X dP}{P(C_{Km})}
\nonumber \\
&=& 
\int_{\widehat{C}_{Km}} \frac{X dP}{P(\widehat{C}_{Km})}
-
\int_{\widehat{C}_{Km}} \frac{X dP}{P(C_{Km})}
\label{term1} \\
&& +
\int_{\widehat{C}_{Km}} \frac{X dP}{P(C_{Km})}
-
\int_{C_{Km}} \frac{X dP}{P(C_{Km})} .
\label{term2}
 \end{eqnarray}
 
 Since $X$ is bounded, term \eqref{term2} goes to zero as
 $n \rightarrow \infty$ by the Dominated Convergence Theorem
 since $I_{C_{Km}}-I_{\widehat C_{Km}}\rightarrow 0$ 
 in $P$-probability under item 1).
 
 To deal with term \eqref{term1}, write it as 
     \begin{eqnarray}
\int_{I_{\widehat C_{Km}}} \left(\frac{1}{P(\widehat C_{Km})}-\frac{1}{P(C_{Km})} \right) X dP .
\label{term3}
\end{eqnarray}

Since $X$ is bounded by $M$, say, the absolute value of
term \eqref{term3} is bounded by

\begin{eqnarray}
M P(\widehat C_{Km}) \left| \frac{1}{P(\widehat{C}_{Km})}-\frac{1}{P(C_{Km})} \right| \nonumber \\
= 
M \left| 1-\frac{P(\widehat C_{Km})}{P(C_{Km})}\right|.
\label{term4}
 \end{eqnarray}
 
 Now, by assumption 1, with probability at least $1 - \eta$, for any $\eta >0$, 
as $n \rightarrow \infty$ we have
       \begin{eqnarray}
\frac{P(\widehat C_{Km})}{P(C_{Km})}\longrightarrow 1.
\nonumber
 \end{eqnarray}
So, the factor in absolute value bars in \eqref{term4}
can be made less than any pre-assigned positive number,
for instance, $\eta/M$, giving that \eqref{term3}
can be made arbitrarily small as $n \rightarrow \infty$.
Consequently,
   \begin{eqnarray}
E(X|\widehat C_{Km})&=&E(X|\widehat C_{Km}) \pm E(X| C_{Km}) \nonumber \\  
&=& o_P(1)+E(X|  C_{Km})\nonumber
 \end{eqnarray}    
 and Step 1 is complete.
 
 Step 2:  By item 2), $\exists z$ such that 
 $C_{Km}\rightarrow \{z\}$.  So, by Step 1, as $n \rightarrow \infty$
 $$
 E(X \mid \hat{C}_{Km} ) \rightarrow z
 $$
 in $P$-probability.  Now, to prove the theorem, it remains to show 
 $z \in \overline{{ \sf supp}(P)}$.
 
 By way of contradiction, suppose $z \notin \overline{{ \sf Supp}(P)}$.
 Then, since $\overline{{ \sf Supp}(P)}$ is a closed set by item 4), its complement is open and hence $\exists \epsilon >0$ so that
 $B(z, \epsilon) \subset \overline{{ \sf Supp}(P)}^c$, where
 $B(z, \epsilon)$ indicates a ball centered at $z$ of
 radius $\epsilon$.  However, consider a sequence of sets $C_{Km}$
 for some fixed $m$ with 
 \begin{eqnarray}
 \forall K:  P(C_{Km})>0;
 \nonumber
 \end{eqnarray}
 such a sequence must exist for some $m$ by Item 3).  By Item 2),
 we have that
 \begin{eqnarray}
{\sf diam} (C_{Km})\rightarrow 0
~ \hbox{as} ~ K \rightarrow \infty.  
 \end{eqnarray}
 So, $\exists K_0$ such that $\forall K \geq K_0$,
 $$
 C_{Km} \subset B(z, \epsilon) ,
 $$
 and therefore $P(C_{Km}) =0$ by letting $n$ and $K$ increase at appropriate
rates, a contradiction.
 Hence, $z \in \overline{{ \sf Supp}(P)}$, establishing the theorem.
\end{proof}

The utility of the theorem stems mostly from the following
corollary.

\begin{corollary}
There exists a $K_0$ so that for $K \geq K_0$, there are 
$m_1,\ldots,m_\ell\leq K$ for sole $\ell$
with 
\begin{eqnarray}
P\left( {\sf Supp}(P) \bigtriangleup \left( \cup_{j=1}^l  \widehat C_{Km_j} \right)^c \right) < \epsilon .
\end{eqnarray}
That is, there are rates at which $n\rightarrow \infty$, $K \rightarrow \infty$ and $\epsilon \rightarrow 0^+$, so that in a limiting sense
\begin{eqnarray}
{\sf Supp}(P) \approx \cup_{j=1}^l \widehat{C}_{Km_j} .
\end{eqnarray}
\end{corollary}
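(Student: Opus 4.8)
The plan is to read the corollary as a covering statement: the chosen basal clusters $\widehat{C}_{Km_1},\ldots,\widehat{C}_{Km_\ell}$ should, in the limit, fill out exactly ${\sf Supp}(P)$. Since $X$ is bounded (item 5), all the data lie in a fixed box, and for each $K$ the $K$-means clusters partition that box; hence the union of the retained clusters and the complement of the union of the discarded clusters coincide within the box, so the two displayed formulations express the same thing. I would fix $\epsilon>0$ and show that, for $K\geq K_0$ and $n$ large with the rates coordinated as in the theorem, $P\big({\sf Supp}(P)\bigtriangleup \cup_{j}\widehat{C}_{Km_j}\big)<\epsilon$.

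First, one half of the symmetric difference is free: because $P$ places no mass off ${\sf Supp}(P)$, for any selection $P\big(\big(\cup_{j}\widehat{C}_{Km_j}\big)\setminus {\sf Supp}(P)\big)=0$. So the whole task reduces to bounding $P\big({\sf Supp}(P)\setminus\cup_{j}\widehat{C}_{Km_j}\big)$, the part of the support left uncovered by the chosen clusters.

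Next I would define the selection and invoke the theorem. For $K$ large, apply the theorem to each cluster of positive limiting probability: its centroid $E(X\mid\widehat{C}_{Km})$ converges to some $z\in\overline{{\sf Supp}(P)}$ and, by item 2, its diameter tends to $0$, so it eventually lies in a shrinking ball about $z$. Let $m_1,\ldots,m_\ell$ index the clusters whose centroid falls in the support and discard the rest. Because the $K$ cells partition the box and ${\sf Supp}(P)$ sits inside it, ${\sf Supp}(P)\setminus\cup_{j}\widehat{C}_{Km_j}={\sf Supp}(P)\cap\big(\cup_{\text{discarded}}\widehat{C}_{Km}\big)$, so its $P$-mass is at most the aggregate mass of the discarded clusters. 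By the theorem a cluster can escape the support only if its limiting probability is zero, so by item 1 and the convergence of the empirical measure the discarded clusters carry total mass tending to $0$; hence $P\big({\sf Supp}(P)\setminus\cup_{j}\widehat{C}_{Km_j}\big)<\epsilon$ for $K\geq K_0$ and $n$ large. Combining with the free half and letting $\epsilon\to0^+$ along the coordinated rates yields ${\sf Supp}(P)\approx\cup_{j=1}^\ell\widehat{C}_{Km_j}$.

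The main obstacle is passing from the theorem's per-cluster, fixed-$m$ convergence, taken with $n\to\infty$ first and $K\to\infty$ second, to a conclusion that holds simultaneously over the growing number $K$ of clusters. The theorem pins down each centroid individually, whereas the corollary needs the \emph{total} mass of all escaping or mismatched cells to be negligible at once. I expect this to require either a uniform-in-$m$ strengthening of item 1 or an explicit count bounding both the number of discarded cells and their cumulative $P$-measure, together with a joint rate for $n\to\infty$, $K\to\infty$ and $\epsilon\to0^+$ under which the approximations in items 1--2 and in the theorem hold uniformly enough to add up. Managing this coordination of limits, rather than any individual estimate, is the delicate part.
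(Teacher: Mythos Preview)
The paper does not actually supply a proof of this corollary; it is stated immediately after Theorem~1 and followed only by a sentence of interpretation (``This corollary gives conditions under which the procedure of choosing $K$ too large \ldots''). So there is no argument in the paper to compare yours against.

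On its own merits, your sketch is reasonable and you have correctly located the crux. The observation that $P\big((\cup_j\widehat{C}_{Km_j})\setminus{\sf Supp}(P)\big)=0$ is fine, and your selection rule (retain the cells whose centroids land in the support) is the natural one suggested by Theorem~1. Most importantly, you have put your finger on the genuine gap: Theorem~1 is a per-cluster, fixed-$m$ statement with $n\to\infty$ first and $K\to\infty$ second, while the corollary needs simultaneous control over all $K$ cells as $K$ grows. The paper simply asserts this passage by calling the result a ``corollary''; your candid acknowledgment that a uniform-in-$m$ strengthening of item~1 (or an explicit count on the mass of discarded cells) is needed is exactly the missing ingredient, and the paper does not supply it either.

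One small caution: your reading of the $^c$ in the first display via the partition property is a plausible repair of what is almost certainly a typo, but note that if one took the second display at face value with $\ell=K$ (all cells), the $P$-measure statement becomes trivial since the $\widehat{C}_{Km}$ partition the sample. Any nontrivial content therefore has to come either from a strict subselection or from a set-level (not merely $P$-null) approximation driven by the shrinking diameters in item~2; your argument leans on the former, which is consistent with how the paper uses the result downstream.
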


This corollary gives conditions under which the procedure of choosing
$K$ too large, in $K$-means for instance, ensures that the
union of the clusters for that $K$ very closely approximates
the support of $X$, regardless of whether the support is
convex or not.  
5disjoint open sets.

Since assumptions 3), 4) and 5) are straightforward to assess, 
we provide sufficient conditions for assumptions 1) and 2)
for the special case of $K$-means clustering.

To do this for assumption 1), recall that
$K$-means uses the Euclidean distance to define the dis-similarity 
$d(x, x^\prime)$ for points $x$ and $x^\prime$.  
Formally, in the limit of large sample sizes, let $\mu_k, ~ k=1,\ldots,K$ be the means of unknown classes $C_{Km}$
under clustering ${\cal{C}}_K = \{ C_{K1}, \ldots , C_{KK} \}$ and let $C$ be the 
membership function 
that assigns data points $x_i$ to clusters i.e., $C(i)=m\Leftrightarrow x_i \in C_{Km}$ under the clustering ${\cal{C}}_K$.  Then the $K$-means
clustering is the ${\cal{C}}_K $ 
that achieves
\begin{eqnarray}
\text{min}_K ~ \text{min}_{\mu_1,\ldots,\mu_K} ~ \sum_{k=1}^{K} \sum_{i:C(i)=k}\| x_i-\mu_k\|^2.
\label{e1}
\end{eqnarray}
(Strictly speaking, the objective function in
\eqref{e1} should be written in its limiting form
\begin{eqnarray}
\sum_{k=1}^K ~ \int_{C_{Km} } \| x - \mu_{Km} \|^2 ~dP(x) 
\label{popKmeans}
\end{eqnarray}
with the constraints $\mu_{Km} = \int_{C_{Km}} X ~ dP$.)

Under the $K$-means optimality criterion, 
given $K$ there are $\mu_1,\ldots,\mu_K \in {\sf Supp}(P)$ such 
that the minimum in \eqref{e1} (or \eqref{popKmeans}) can be written
as 
\[
\sum_{k=1}^{K} \int_{C_{Km}} (x-\mu_{Km})^2 ~P(dx),
\]
with the property that
\begin{eqnarray}
x \in C_{Km} \Longleftrightarrow d(x,\mu_{Km}) \leq d(x,\mu_{K\ell}), \ell\neq k.
\label{convex}
\end{eqnarray}
Defining the centroid of $C_{Km}$ as
\begin{eqnarray}
\mu_{Km}=E\left({x}\mid {x}\in C_{Km}\right)= \int_{C_{Km} } {x} ~P(dx), \nonumber
\end{eqnarray}
with corresponding estimate defined as 
\begin{eqnarray}
\widehat \mu_{Km}=E\left({x}\mid {x}\in \widehat C_{Km}\right)= \int_{{\widehat{C}_{Km}}} {x}~ P(dx) ,
\nonumber
\end{eqnarray}
we can quote the following result.

\begin{theorem}
Under various regularity conditions, as $n \rightarrow \infty$, the 
$K$-means clustering
$\widehat{\cal{C}}_K$ is consistent for ${\cal{C}}_K$.  In particular, 
\begin{eqnarray}
\widehat {\mu}_{Km} \longrightarrow \mu_{Km}, ~ a.s.
\end{eqnarray}
\end{theorem}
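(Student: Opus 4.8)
The plan is to follow the classical strong-consistency argument for $K$-means clustering (as in Pollard's approach), specialized to the bounded setting already imposed in item 5) of Theorem 1. The "regularity conditions" I would make explicit are twofold: that $X$ is bounded, so that $\int \|x\|^2 \, dP < \infty$ and every optimal center lies in a fixed compact set $S$ (for instance the closed convex hull of ${\sf Supp}(P)$), and that the population minimizer $A^\ast = \{\mu_{K1}, \ldots, \mu_{KK}\}$ of the $K$-means objective is unique up to relabeling. For a candidate configuration of centers $A = \{a_1, \ldots, a_K\} \subset S$ and a probability measure $Q$, introduce the objective functional
\[
\Phi(A, Q) = \int \min_{1 \le k \le K} \| x - a_k \|^2 \, dQ(x).
\]
By the equivalence in \eqref{e1}--\eqref{popKmeans}, the population centers minimize $\Phi(\cdot, P)$ while the empirical centers $\widehat A_n = \{\widehat\mu_{K1}, \ldots, \widehat\mu_{KK}\}$ minimize $\Phi(\cdot, P_n)$, where $P_n = n^{-1}\sum_{i=1}^n \delta_{x_i}$ is the empirical measure. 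The target $\widehat\mu_{Km} \to \mu_{Km}$ is then precisely the statement $\widehat A_n \to A^\ast$.

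First I would dispose of existence and compactness: since $X$ is bounded, every center in a near-optimal configuration must lie in $S$, so the minimization may be restricted to the compact set $S^K$ without loss, and minimizers exist for both $P$ and $P_n$. The crux of the argument is the second step, a uniform strong law of large numbers,
\[
\sup_{A \in S^K} \bigl| \Phi(A, P_n) - \Phi(A, P) \bigr| \longrightarrow 0 \quad \text{a.s.}
\]
Here the integrand $g_A(x) = \min_k \|x - a_k\|^2$ is uniformly bounded and jointly continuous in $(x, A)$ on the compact set $S \times S^K$; one upgrades the pointwise strong law (which holds for each fixed $A$ by the usual SLLN) to the uniform statement by a bracketing or equicontinuity argument exploiting compactness of $S^K$.

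With the uniform SLLN in hand, the third step is a standard argmin-continuity argument. Writing $A^\ast$ for the unique population minimizer, optimality gives $\Phi(\widehat A_n, P) \ge \Phi(A^\ast, P)$ and $\Phi(\widehat A_n, P_n) \le \Phi(A^\ast, P_n)$; combining these with the uniform convergence and the pointwise strong law at $A^\ast$ forces $\Phi(\widehat A_n, P) \to \Phi(A^\ast, P)$. Since $A^\ast$ is the unique minimizer and $\Phi(\cdot, P)$ is continuous on the compact $S^K$, any limit point of $\langle \widehat A_n \rangle$ must equal $A^\ast$, so $\widehat A_n \to A^\ast$ almost surely, which is the claim. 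The main obstacle is the uniform strong law: the pointwise strong law and the optimality inequalities are routine, but closing the gap between $\Phi(\widehat A_n, P_n)$ and $\Phi(\widehat A_n, P)$ at the \emph{data-dependent} argument $\widehat A_n$ is exactly what requires uniformity over $S^K$, and this in turn rests on controlling the complexity (metric entropy) of the function class $\{g_A : A \in S^K\}$. A secondary subtlety is the uniqueness hypothesis: without it the conclusion can only be phrased as convergence of $\widehat A_n$ to the set of population minimizers, up to the inherent label-permutation ambiguity.
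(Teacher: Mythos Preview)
Your proposal is correct and is precisely the argument the paper has in mind: the paper's own ``proof'' consists solely of the citation ``See Pollard (1981)'', and what you have written is a faithful outline of Pollard's strong-consistency proof (uniform SLLN over a compact parameter set, followed by an argmin-continuity argument under a uniqueness assumption). There is no substantive difference to report.
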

\begin{proof}
See Pollard (1981).
\end{proof}

Since $K$-means is a centroid-based clustering, we have that
\begin{eqnarray}
x\in \widehat C_{Km} \Longrightarrow 
\forall \ell \neq m ~ 
d(x,\widehat \mu_{Km}) \leq d(x,\widehat \mu_{K\ell}), \nonumber
\end{eqnarray}
so combining this with \eqref{convex} we get that
 \begin{eqnarray}
 \widehat{\mu}_{Km} \longrightarrow \mu_{Km} \Longleftrightarrow P(\widehat {\cal{C}}_K\bigtriangleup  {\cal{C}}_K ) \longrightarrow 0,
 \end{eqnarray}
 i.e., assumption 1) is satisfied.

Turning to assumption 2), consider the following example with $K$-means
to understand the intuition behind it.  Suppose a data set is generated
as two clusters of the same number of outcomes, one with high 
variance and one with low variance, see the upper left panel
in Fig. \ref{hk3}.   Then, applying $K$-means with increasing $K$,
e.g., $K=2, 4, 10, 14, 18, 20, 24, 30$ in Fig. \ref{hk3}
shows that $K$-means partitions the two clusters more and more finely
but continually assigns more clusters to the high variance data.  
In this context, assumption 2) means that as $n$ increases, the 
clusters will appear to `fill in' yielding $K$
regions with non-void interior for each $K \geq 2$ even if the $n$ required for a given $K$ increases with $K$.

\begin{figure}  \centering
\includegraphics[width=.5\textwidth]{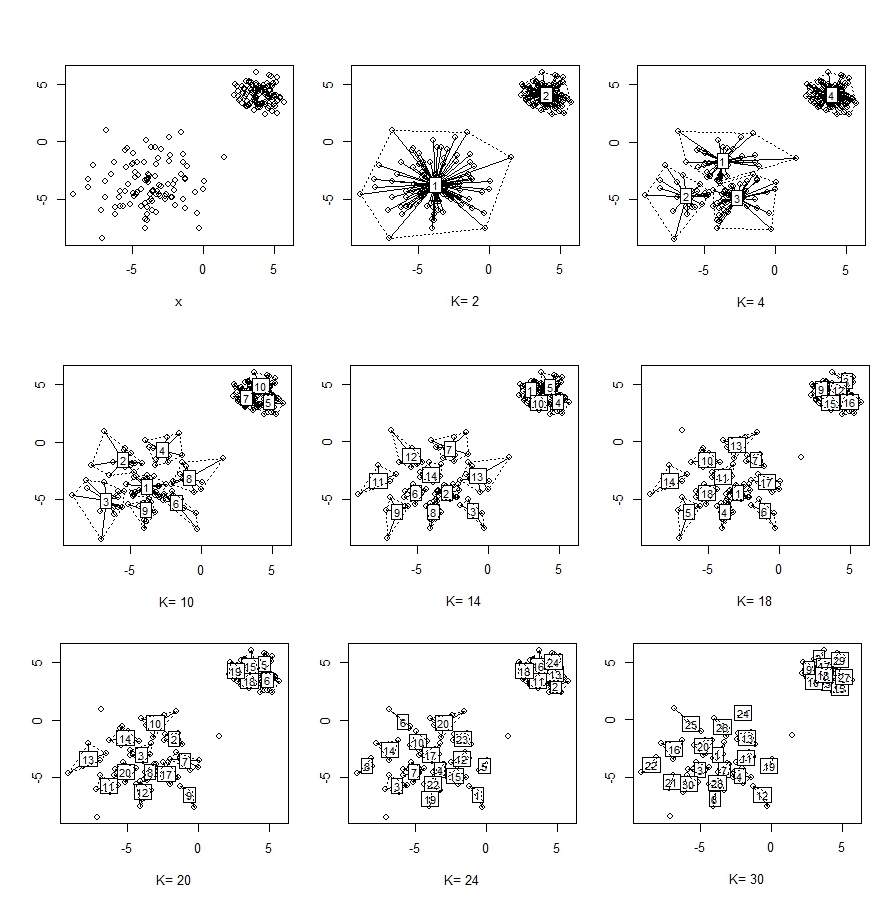} \caption{Plots of two clusters using different 
choices of $K$ to see how clustering divides the true clusters.}
\label{hk3}
\end{figure}

To begin formalizing this intuition, write
\begin{eqnarray}
WSS=\sum_{k=1}^{K} \sum_{i:C(i)=k}\| x_i-\mu_k\|^2,
\end{eqnarray} 
recognizable as the `within clusters' sum of squares.
In the one-dimensional case, suppose $2n$ data points are drawn, 
$X^*_1=(X_{1},\ldots,X_{n}) \sim Unif[0,a]$ and 
$X^*_2 = (X_{n+1}, \ldots , X_{2n} \sim Unif[2a,4a]$.  Clearly, 
$X_1$ represents a low diameter component and $X_2$ represents
a high diameter component.  If we seek a $K$-means clustering
for $K=2$, it is clear that $X^*_1$ and $X^*_2$ should be found.
However, consider $K=3$.  There are two natural clusterings.
The first is to split $X^*_1$ into two clusters of equal size, 
say $C_1$ and $C_2$ letting $C_3 = \{X^*_2 \}$.  The other is
the reverse: Let $C_1 = \{ X^*_1 \}$ and split $X^*_2$
into two clusters of equal size, 
say $C_2$ and $C_3$.  It is easy to verify that the population
value of WSS for the first clustering is $(9/24)a^2$
while for the second clustering it is $(3/24)a^2$.  This means
the second clustering, splitting the high diameter component,
gives a smaller WSS.  Since $K$-means chooses the mean of WSS
over the number of clusters, in this case, $K$-means would choose
the second clustering.

The example can be continued for higher $K$, higher $K_T$,
and other distributions continuing to show that $K$-means tends
to split the largest cluster until it is worthwhile to
split the smaller cluster and then resumes splitting the larger
cluster, and so on.  A consequence of this is that
$\hat{C}_{Km}$ tends to decrease in size as $K$ increases and
this suggests that $C_{Km}$ will similarly decrease as assumed
in Item 2) as $n \rightarrow \infty$.  We state a version of this 
in the following.

\begin{proposition}
Suppose a clustering method continually splits the largest cluster
on the population level
as $K$ increases.  Then,
given $\delta > 0$, there is a $K_0$ so that
\begin{eqnarray}
k>K_0 \Longrightarrow {\sf diam}(C_k)<\delta. 
\end{eqnarray}
\end{proposition}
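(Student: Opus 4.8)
The plan is to prove the slightly stronger statement that the maximum cluster diameter $M_K := \max_{m=1,\dots,K} {\sf diam}(C_{Km})$ tends to $0$ as $K \to \infty$; the required $K_0$ is then any index past which $M_K < \delta$. The engine of the argument is the population within-cluster sum of squares
\begin{eqnarray}
W_K = \sum_{k=1}^{K} \int_{C_{Km}} \| x - \mu_{Km} \|^2 \, dP(x),
\nonumber
\end{eqnarray}
which I would use as a monotone potential: $W_K$ is non-increasing in $K$ and satisfies $W_K \to 0$. The latter is elementary under item 5) of Theorem 1, since a bounded support can, for every $\epsilon>0$, be covered by finitely many balls of radius $\epsilon$; assigning each point to the nearest ball center as a cluster representative gives $W_K \le \epsilon^2$ once $K$ exceeds the covering number, so $W_K \downarrow 0$.

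The second ingredient is a geometric inequality bounding the contribution of a single cluster below by its diameter. First I would record that, by the Voronoi characterization \eqref{convex}, the $K$-means clusters are convex; combined with the hypothesis that $p$ is constant on its support, the conditional law of $X$ given $C_{Km}$ is then uniform on a convex body. For such a law, writing $d = {\sf diam}(C_{Km})$ and using $\int_{C}\|x-\mu_C\|^2\,dP = \tfrac12 P(C)\,\E\big(\|X-X'\|^2 \mid X,X' \in C\big)$ for independent copies $X,X'$ drawn from the conditional law, one has the comparison
\begin{eqnarray}
\int_{C_{Km}} \| x - \mu_{Km} \|^2 \, dP(x) \;\ge\; c_m \, P(C_{Km}) \, {\sf diam}(C_{Km})^2
\nonumber
\end{eqnarray}
for a dimension-dependent constant $c_m>0$, because on a convex body with uniform weight the mean squared pairwise distance is comparable to the squared diameter. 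Summing over clusters and retaining only the term attaining the maximal diameter gives $W_K \ge c_m\, P(C_K^{\ast})\, M_K^2$, where $C_K^{\ast}$ denotes a cluster realizing $M_K$.

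I would then close by contradiction. If $M_K \not\to 0$, there is a $\delta>0$ and a subsequence along which $M_K \ge \delta$, whence $P(C_K^{\ast}) \le W_K/(c_m \delta^2) \to 0$: the largest cluster has vanishing mass and, by constant density, vanishing volume. This is precisely the situation the ``split-the-largest'' hypothesis is meant to exclude: a convex cluster of diameter $\ge \delta$ with volume tending to $0$ must be elongated, and a variance-reducing split cuts it transversally, so its diameter drops by a factor bounded away from $1$ and it cannot persist as the stable largest cluster.

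Formalizing this dichotomy is the main obstacle. The clean half, namely that a mass bounded below forces the diameter term to keep $W_K$ away from $0$, is immediate; the delicate half is ruling out thin, low-mass clusters of large diameter, which requires quantifying that each split of the current largest cluster reduces $M_K$ by a factor bounded away from $1$ over a bounded number of steps. I expect to need a mild non-degeneracy condition on the splitting rule, namely that each split allocates to each child a fraction of the parent's mass that is bounded below, so that the max-diameter cluster's mass stays bounded below while its diameter remains $\ge \delta$; this contradicts $P(C_K^{\ast})\to 0$ and yields $M_K \to 0$, and hence the claimed $K_0$.
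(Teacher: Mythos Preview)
Your approach is genuinely different from the paper's, and considerably more elaborate. The paper gives a very short argument: starting from the Voronoi description $C_{Km}=\{x:|x-\mu_{Km}|<|x-\mu_{K\ell}|,\ \ell\neq m\}$ with current maximum diameter $\delta_0$, it simply asserts that for $K'$ large enough the maximum diameter drops to at most $\delta_0/2$, and then iterates this halving to reach any prescribed $\delta$. There is no potential function, no $W_K$, and no mass--diameter inequality; the proof is three sentences.

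Your route through the population WSS is an interesting alternative. The monotonicity and $W_K\to 0$ via a covering number are correct under boundedness, and your variance--diameter comparison $\int_{C}\|x-\mu_C\|^2\,dP\ge c\,P(C)\,{\sf diam}(C)^2$ does hold for the uniform law on a convex body with a dimension-only constant. But the argument stalls exactly where you say it does: from $W_K\to 0$ and the lower bound you only get $P(C_K^\ast)M_K^2\to 0$, and you must then exclude the ``thin sliver'' branch $P(C_K^\ast)\to 0$ while $M_K\ge\delta$. Your proposed fix (a mass-balance condition on each split) is an additional hypothesis not present in the statement, so as written the proof is incomplete. Note also that your machinery is tied to uniform-density, convex (Voronoi) clusters, i.e.\ essentially to $K$-means, whereas the proposition is phrased for any method that ``continually splits the largest cluster''; the paper's proof also silently specializes to Voronoi cells, so this is not a defect unique to you, but it does mean neither argument matches the stated generality.

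If you want to align with the paper, the entire WSS scaffolding can be dropped in favor of the direct halving observation. That said, the paper's halving step is itself asserted rather than proved: it is not obvious, without further structure, that finitely many ``split the largest'' moves force the maximum diameter below $\delta_0/2$, since a split need not halve the diameter of the parent. Both arguments therefore lean on the same unstated quantitative fact --- that splitting the currently largest (Voronoi) cell reduces the running maximum diameter by a factor bounded away from $1$ after boundedly many steps --- and your write-up has the virtue of isolating this as the crux.
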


\begin{proof}
Let $\{ \mu_{K1},\ldots,\mu_{KK} \}$ be the centers of 
the optimal clusters and write
\[
C_{Km}=\{x: 
|x- \mu_{Km}|<|x-\mu_{K \ell} | \}
\]
for $\ell \neq m$.  Let 
\[
\delta_0=\max\{{\sf diam}(C_{Km}) : m = 1, \ldots , K \}.
\]
Then, for $K^\prime$ large enough,  
\[
\max\{  {\sf diam}(\widehat{C}_{K^\prime m}) : 
m = 1, \ldots , K^\prime \} \leq \frac{\delta_0}{2} .
\]
Since this process can be repeated the proposition is established. 
\end{proof}

Taken together, the results of this subsection justify the 
$K$-means part of Step 1) of Algorithm \#1.

\subsection{Merging the 'basal' clusters}
\label{SLmerging1}

Next we turn to justifying the use of single linkage (SL) in the second
part of Step 1 in Algorithm \# 1.  Recall, SL means that we merge
sets that are closest i.e., given a distance
$d$ on, say, $C_{K1}, \dots , C_{KK}$, SL clustering merges the two sets 
that achieve 
$$
\min_{m, m^\prime = 1, \dots , K; m \neq m^\prime}  d(C_{Km}, C_{Km^\prime} ) .
$$
The question that remains is how to choose $d$.   Here we use two
choices.   The first is to write $d$ as 
\begin{eqnarray}
d_{usual}(C_{Km}, C_{Km^\prime}) = \min_{x \in C_{Km}, x^\prime \in C_{Km^\prime }} d(x, x^\prime) ,
\label{usualdist}
\end{eqnarray}
where $d$ is a metric e.g., Euclidean,
i.e., $d_{usual}$ gives the distance between two sets as the minimum over the distances
between their points.

However, being an order statistic, \eqref{usualdist} can be affected by
extreme values in the data set.  
So, we stabilize 
$d_{usual}(C_{Km}, C_{Km^\prime})$ by replacing it with the 20th 
percentile of the distances between
points in $C_{Km}$ and $C_{Km^\prime}$.
That is, for $m \neq m^\prime$, we find the distances
$$
\{
d(x, x^\prime) :
x \in C_{Km}, x^\prime \in C_{Km^\prime} \},
$$
take their order statistics, and find the approximately
$\lfloor .2 \#(C_{Km}) \#(C_{Km^\prime}) \rfloor$ order
statistic.  (Finding a non-integer order statistic is done 
internally to the R program using linear interpolation.)
We call the resulting dissimilarity $d_{20}$, i.e.,
\begin{eqnarray}
d_{20}(C_{Km}, C_{Km^\prime}) = 20-\mbox{th percentile of} ~ 
\nonumber \\
 \{ d(x, x^\prime) : {x \in C_{Km}, x^\prime \in C_{Km^\prime }} \}
\label{20percent}
\end{eqnarray}
to indicate it
is based on the 20th percentile of the distances between points
in the two sets.  Thus, with $d_{20}$
we are using single linkage with respect to a 
dissimilarity that should be robust against extreme values.
Other percentiles such as the fifth or tenth can also be used, but they
gave values between $d_{usual}$ and $d_{20}$
in the examples we studied.  It seemed from
our work that $d_{20}$ gave the best resuts in cases where $d_{usual}$ did not.

For the sake of completeness we next give conditions under 
which $d_{usual}$ can be expected to perform well.  We are
unable to demonstrate this for $d_{20}$ but suggest there will
be an analogous result since using $d_{20}$ gave results that
were essentially never worse (and sometimes better) than $d_{usual}$.

Suppose $\overline{\sf Supp}(P)$ consists of $K_T$ disjoint
regions each being the closure of an open set, assumed disjoint from
the other open sets.  Let $\delta$
be the minimum distance between points in disjoint components,
i.e.,
$$
\delta = \min_{m, m^\prime = 1, \ldots , K_T; m \neq m^\prime}
\min_{x \in C_m, x^\prime \in C_{m^\prime}} d(x, x^\prime).
$$
If $n$ and $K$ are chosen so large that all the $\hat{C}_{Km}$'s 
for $m= 1, \ldots , K$ have ${\sf diam}(\hat{C}_{Km}) < \delta/3$
(any number strictly less than $\delta$ will suffice), choose a
regular grid $G$ of points in $\overline{\sf Supp}(P)$ so that the
distance between two adjacent points on the same axis
is less than $(1/2)(\delta/3)$.  This ensures that each $\hat{C}_{Km}$
has at least one grid point in it.  
The points in $G$ are essentially a perfectly representative
set of $\overline{\sf Supp}(P)$ and hence of $P$. 
Now, if we apply SL with
$d_{usual}$ to the points in $G$ we will always put points
or subsets in the same component together before we merge points
or subsets of any two distinct components.  That is, the metric 
on $G$ ensures that the closest point to any other point will always
be in the same component if possible.  So, we have proved the
following theorem.

\begin{theorem}
If the components of $\overline{\sf Supp}(P)$ are disjoint
there is a cut point in the dendrogram of the SL merging under
$d_{usual}$ of the
points in $G$ that separates the components perfectly.
\end{theorem}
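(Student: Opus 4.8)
The plan is to recognize single linkage (SL) clustering as Kruskal's algorithm for a minimum spanning tree. If we sort all pairwise distances $d(g,g')$ between grid points $g,g' \in G$ in increasing order and process them one at a time, merging the two current clusters joined by a given edge whenever those clusters are still distinct, then the successive merge heights read off the SL dendrogram are exactly these sorted inter-cluster minimum distances. Consequently, to exhibit a cut point that separates the $K_T$ components of $\overline{\sf Supp}(P)$ perfectly, it suffices to prove a \emph{gap}: that every within-component merge occurs at a strictly smaller height than every between-component merge. The desired cut is then any dendrogram level lying in that gap.

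First I would bound the between-component distances from below. Since each point of $G$ lies in $\overline{\sf Supp}(P)$ and hence in exactly one of its disjoint components, any two grid points $g, g'$ belonging to \emph{distinct} components satisfy $d(g,g') \ge \delta$ by the very definition of $\delta$. Thus no edge of weight less than $\delta$ joins two distinct components, and in the Kruskal/SL process no between-component merge can occur below height $\delta$.

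Next I would bound the within-component merges from above by showing that, for each component $C$, the grid points of $G$ lying in $C$ are connected through edges of length strictly less than $\delta$. Because the grid spacing satisfies $s < \delta/6$, axis-adjacent grid points are within distance $\delta/6 < \delta$ of each other. Since $C$ is the closure of a connected open set, for $s$ small enough any two grid points in $C$ can be joined by a chain of grid points lying in $C$ whose consecutive gaps are at most $s < \delta$; moreover each component meets $G$ (every $\hat C_{Km}$ contains a grid point), so there are exactly $K_T$ within-component clusters to form. Processing edges in increasing order, all these short within-component edges are handled before the height reaches $\delta$, so there is a height $h^* < \delta$ by which each component has been merged into a single cluster while no two distinct components have yet been joined. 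Cutting the dendrogram at any $h \in [h^*, \delta)$ therefore yields precisely the $K_T$ components, which proves the theorem.

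The main obstacle is the within-component connectivity claim, namely that a fine regular grid meets each component in a set that is connected through short hops. A straight segment between two interior grid points can leave $C$ near a concave stretch of $\partial C$, so one cannot simply connect grid points along chords; instead the chain must be routed through the interior, and one must invoke the smoothness of the boundary (equivalently, a positive reach or minimum feature size) to guarantee that grid points tracking an interior path remain inside $C$ once $s$ drops below a threshold determined by that feature size. Everything else — the lower bound on between-component distances and the identification of SL merge heights with sorted edge weights — is routine.
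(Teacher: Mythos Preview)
Your approach is essentially the paper's: both argue a gap, observing that any two grid points in distinct components are at distance at least $\delta$, while the grid spacing $s<\delta/6$ forces within-component merges to occur below $\delta$, so SL (equivalently Kruskal) completes all within-component merges before any between-component merge. The paper's proof is in fact a brief sketch that simply asserts ``the closest point to any other point will always be in the same component if possible'' without further justification, so your explicit identification of within-component grid connectivity as the nontrivial step (and your appeal to boundary smoothness/reach to resolve it) is more careful than what the paper itself supplies.
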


Now, if $n$ is large enough, the data set can be taken as 
perfectly representative of $\overline{\sf Supp}(P)$ and hence of $P$,
i.e., it is a good approximation to $G$ in the sense of
filling out all the components of $P$.  Hence, it follows that
SL using $d_{usual}$ can perfectly separate the components
of $\overline{\sf Supp}(P)$, and hence of $P$, in a limiting sense.
This does not require convexity of the components of $P$, only that the data points can be
regarded as essentially a perfect representation of $P$.

Note that one of the key hypotheses of this theorem forces the components
of $P$ to be separated.  In fact, this is often not the case -- components
may touch each other at individual points or may be linked by a very thin short line.  In these cases,
the components of
${\sf Supp}(P)$ may not have disjoint closures or the closure of the components may not give 
$\overline{\sf Supp}(P)$, respectively.  When assumption 4 of Theorem 1 is satisfied,
we have found that $d_{usual}$ works well:  In a limiting sense, two basal sets from the
same component will always be joined before either is joined to another component.
However, when hypothesis 4 of Theorem 1 is not satisfied, $d_{usual}$ does not have this property.
In these cases, we have found $d_{20}$ to work better; this is seen in Subsec. \ref{aggregation}.
It should be noted that the examples in Subsecs. \ref{halfring} and \ref{flamedata} also do not
seem to satisfy hypothesis 4 but for these cases $d_{usual}$ and $d_{20}$ give
comparable results.    We regard this as a reflection of the fact that hypothesis 4 is necessary but not 
sufficient for the conclusion of Theorem 1.  Also, although not shown here,
we examined our clustering technique using $d_5$ and $d_{20}$ in the
sense of \eqref{20percent} but found they were outperformed by at
least one of $d_{usual}$ or $d_{20}$.   One point in favor of $d_{usual}$ is that it is interpretable
in that two points are in the same cluster merely if they are close enough, unlike
$d_{20}$. 

Why does the 20-th percentile work well in cases where hypothesis 4 is not satisfied?
While we do not have a formal argument, the intuition may be expressed as follows.
If hypothesis 4 is not satisfied and the clusters are highly non-convex 
$d_{usual}$ will be much more sensitive to the boundary values of the clusters than
$d_{20}$.  Consequently, there may be overly influential data points -- data points that 
are valid but far from other data points -- that will affect the sequence of merges of the basal
clusters in ways that are not representative of the support of $P$.  Using $d_{20}$ in place
of $d_{usual}$ reduces the influence of these data points eliminating distortions of the 
path by which 
basal clusters are merged.  We do not have a rule for when to use $d_{usual}$ versus $d_{20}$,
however, the presence of extreme points (as opposed to outliers) is a good indicator that $d_{20}$ should be
preferred and this is consistent with all our examples.  Indeed, the examples
in Subsecs. \ref{halfring} and \ref{flamedata}, where $d_{usual}$ and $d_{20}$ give
equivalent results, do not have clusters with extreme points.

\subsection{Using the overall membership matrix}
\label{membership}

In Steps 2 and 3 of algorithm \#1, a composite membership matrix $M(B)$
for $B$ clusterings is defined.  Then, single linkage clustering is 
applied to the rows of $M(B)$ in Step 4.  Because $D$ is absed on $M(B)$ our results should be
robust results because by using several random starts
for the clustering and looking only at which
cluster a data point is in, we are ensuring that
the final clustering is a sort of `consensus clustering'
representing what is invariant under two sorts of randomness -- randomness of the clustering and random noise in the data points themselves.

Our use of the matrix $M(B)$ means our method may be regarded as an ensemble approach.  Each set of columns in
$M(B)$ represents a clustering and pooling over clusterings in step 4
effectively means that we are analyzing $B$
different clustering structures for the data.
The analog of ensembling the matrices is played by 
single linkage which groups similar clusterings together.
The result is that the final clustering is stabilized.

\subsection{Growing and pruning}

In Step 4, algorithm \#1 grows a dendrogram
of size $K^*$ by single linkage.   In fact, $K^*$ may be
bigger than the size $K$ of the desired clustering.  
In such cases, the dendrogram `grown' is too large
and must be pruned back.  This is done in Step 5.

The benefit of this is that by growing a dendrogram a
little larger than required, the method may look one or
more splits further along so that outliers or other aberrant points may 
be removed.  The outliers or other aberrant points are
in the $v_2$ small clusters that are removed before
the data are reclustered.  Leaving out the $v_2$ small
clusters means that the resulting clustering should be
more stable, and therefore hopefully more accurate.  Of course, the
outliers  and aberrant points in the $v_2$ small clusters
must be merged back into the clustering as is
done at the end of Step 5.  However,  they are merged back
into clusters they were not used to form.  Hence, the 
final clustering may
be more representative of $P$ than if the extra points
were used to form the clusters in the first place.

At root, Steps 2-5 are designed to take advantage of the 
chaining property of single linkage.  Usually, the chaining property is a reason not to use single linkage; here the chaining property is used only to 
fill out clusters but as far as possible not to merge
them.   In terms of filling out clusters, the chaining property is desirable.  It only becomes disadvantageous
when it inappropriately
joins clusters.

\subsection{Estimating $K_T$ by using lifetimes}

Steps 1 and 2 in Algorithm 2 have been addressed in Subsecs. \ref{largeKell}, \ref{SLmerging1}, and 
\ref{membership}.  So, it remains to justify the use 
of lifetimes in Steps 3-5 for estimating $K_T$.

As can be seen in 
Fig. 3 of \cite{FredandJain} where they give an example of lifetimes for a dendrogram,
defining clusters by the use of a maximum lifetime has the tendency to amplify the separation between so that points are usually only put in their final cluster near the leaves
of the dendrogram.
That is, there are often several long lifetimes that give reasonable places to
cut the dendrogram such that the clusters at the bottom are well separated and homogeneous
in the sense that further decreases in dissimilarity are small.   This method seems to work well when
the clusters are well separated, regardless of whether they are convex.  

Our refinement of
the \cite{FredandJain} technique is an effort to extend it to cases where the separation
among clusters is not as clear.   Indeed, removing subsets of data that are too small before applying 
\cite{FredandJain} ensures that likely outliers or
other aberrant points will not affect the collection of lifetimes.  
The benefit is that outliers and aberrant points will rarely be seen as separate clusters,  
yielding a more accurate number of clusters.

\section{Evaluation of the our techniques} 
\label{comparisons}

In this section, we compare the performance of the two proposed techniques
with the existing techniques described  in Sec. \ref{intro} using 
eight data sets that are qualitatively different
from each other.  The first five are  the {\sf 3-NORMALS}, 
{\sf AGGREGATION}, {\sf SPIRAL},  {\sf HALF-RING}, {\sf FLAME} data sets found at \cite{Webref}.   These two dimensional data sets are `shape data'.
{\sf 3-NORMALS} is simulated from three normal distributions giving convex shapes that are not well separated.
{\sf AGGREGATION} has one cluster that is non-convex and several others that
are convex but not separated.   
{\sf SPIRAL} has three separated but nonconvex and `intertwined' clusters.  {\sf HALF-RING}
has two separated clusters that are non-convex with different densities making it ambiguous whether one of the clusters
should be split or not.  {\sf FLAME} has two clusters, one convex, the other non-convex.
The two clusters are not well-separated and the convex cluster has some outliers.  
We did not examine other shape sets at \cite{Webref} because they were
similar to data sets we had used or were too challenging for all methods.   

For four of these five data sets, we considered seven different clustering techniques: 
 $K$-means ({\sf K-m}), EAC, CT, {\sf CHAMELEON} (hereafter abbreviated to {\sf CHA}, 
spectral clustering {\sf SPECC},  {\sf SHCm}, and {\sf SHC20}.   We applied all seven to 
{\sf AGGREGATION},  {\sf SPIRAL},  {\sf HALF-RING}, and {\sf FLAME}
but only applied {\sf CHAMELEON} to one output from {\sf 3-NORMALS} because
{\sf CHAMELEON} is intended for nonconvex data and is cumbersome when doing
many repetitions with simulated data.

The first five data sets are two dimensional, so it is enough to compare the output of the clustering
techniques visually.   However, because we can unambiguously assign a `true' clustering to these
data sets, we can also calculate  an
accuracy index, AI, i.e.,  the proportion of data points correctly assigned to their cluster.
This was calculated using the software described in \cite{Fraleyetal2012}.    Since there is
randomness built into {\sf K-m}, {\sf EAC} , {SPECC}, and our method, where necessary i.e., for
non-synthetic data, we repeated the techniques
and report the mean AI (MAI) and its standard deviation (SAI).  This was not necessary for CT since
it does not vary over repetitions (hence SAI  for CT is always zero).   
The results are in Subsecs. \ref{3_normals} and \ref{nonconvex}.

The last three data sets have four or more dimensions.  The first of these is Fisher's
familiar {\sf IRIS} data that has four dimensions.   The second is 
the  {\sf GARBER} microarray data set
found at  \cite{Webref3}.  It has 916 dimensions. The third is the
 {\sf WINE QUALITY} data set found  at \cite{Webref2}.  This data set has 11 variables and is really
two data sets, one for red wine and one for white wine.  
We used all seven techniques on {\sf IRIS} and {\sf GARBER} but dropped {\sf CHAMELEON} for
{\sf WINE QUALITY}  because it was hard to implement and, being graph-theoretic, it cannot be 
expected to perform well on data that have many dimensions.   We also dropped {\sf CT}
for {\sf WINE QUALITY} since {\sf CT} so rarely performed well.
In these examples, the data sets are from classification problems so we have assumed that
a unique true clustering exists as defined by the classes.   We can again calculate the MAI and SAI as described above.
The results are in Subsec. \ref{higherdims}.

In all eight examples, we set $B=200$ for {\sf EAC}, {\sf SHCm}, and {\sf SCH20} to ensure fairness.  
In the first seven examples 
we set $K_{max}=25$ in {\sf SHC}; in the {\sf GARBER} data set we chose $K_{max} = 11$ because
$K_{max} < K_\ell-2$ and $\lfloor n/5 \rfloor =  \lfloor 74/5 \rfloor = 14$
so that it made sense to draw $K_\ell$'s from $DU(\lfloor n/6 \rfloor,\lfloor n/4 \rfloor)=DU(12,18)$.   The implication of our examples is that 
while other methods may equal or even perform slightly better than one or both of the
{\sf SHC} methods in some cases, no competitor beats them consistently by a 
nontrivial amount.

We begin with straightforward comparisons of the clustering techniques for fixed $K$ and then turn to
comparing the techniques for choosing $K$.  Specifically, for all eight data sets, we compare six techniques 
for choosing $K$, namely, the silhouette distance, the gap statistic, BIC, 
EAC, and two methods based on Algorithm \# 2 ({\sf EKm} and {\sf EK20}).  These results are
given in Subsec. \ref{estK}.

\subsection{Convex example: {\sf 3-NORMALS}}
\label{3_normals}

In order to study this convex clustering problem, consider Figure \ref{cov1} showing $n=120$ 
observations that clearly form three clusters. 
The data were generated by taking 40 independent and identical draws from each of three normal
distributions.  Specifically,  the there normals are  $(X_i, Y_i)^T  \sim N(\mu_j,\Sigma_j)$, $j=1,2,3$ where $$\mu_1=\left(
    \begin{array}{c}
      2 \\
      2
    \end{array}
  \right) , ~
\Sigma_1=\left(
    \begin{array}{cc}
      .7&0 \\
      0&.7
    \end{array}
  \right) ,
$$
$$
\mu_2=\left(
    \begin{array}{c}
      -2 \\
      2
    \end{array}
  \right),  ~ \Sigma_2=\left(
    \begin{array}{cc}
      .7&0 \\
      0&.7
    \end{array}
  \right),
$$ 
and
$$
\mu_3=\left(
    \begin{array}{c}
      0 \\
      -1
    \end{array}
  \right) , ~ \Sigma_3=\left(
    \begin{array}{cc}
      1.5&0 \\
      0&0.4
    \end{array}
  \right) .
$$  

Applying the seven clustering techniques to one set of the  {\sf 3-NORMALS} data with $K_T=3$
gives Fig. \ref{cov1}.  First, the upper left panel shows the true clusters.
It appears that {\sf K-m}, {\sf CT}, {\sf SPECC}, and {\sf SHC20}  do roughly equally well,
although spectral clustering tends to enlarge the right hand cluster unduly.
By contrast, {\sf CHA}, {\sf EAC}, and {\sf SHCm} do not give intuitively reasonable results.  
{\sf CHA} makes the
lower left cluster too small; {\sf EAC} and  {\sf SHCm} effectively merge the right and bottom clusters.

\begin{figure} [htp!]
\centering
\includegraphics[width=.5\textwidth]{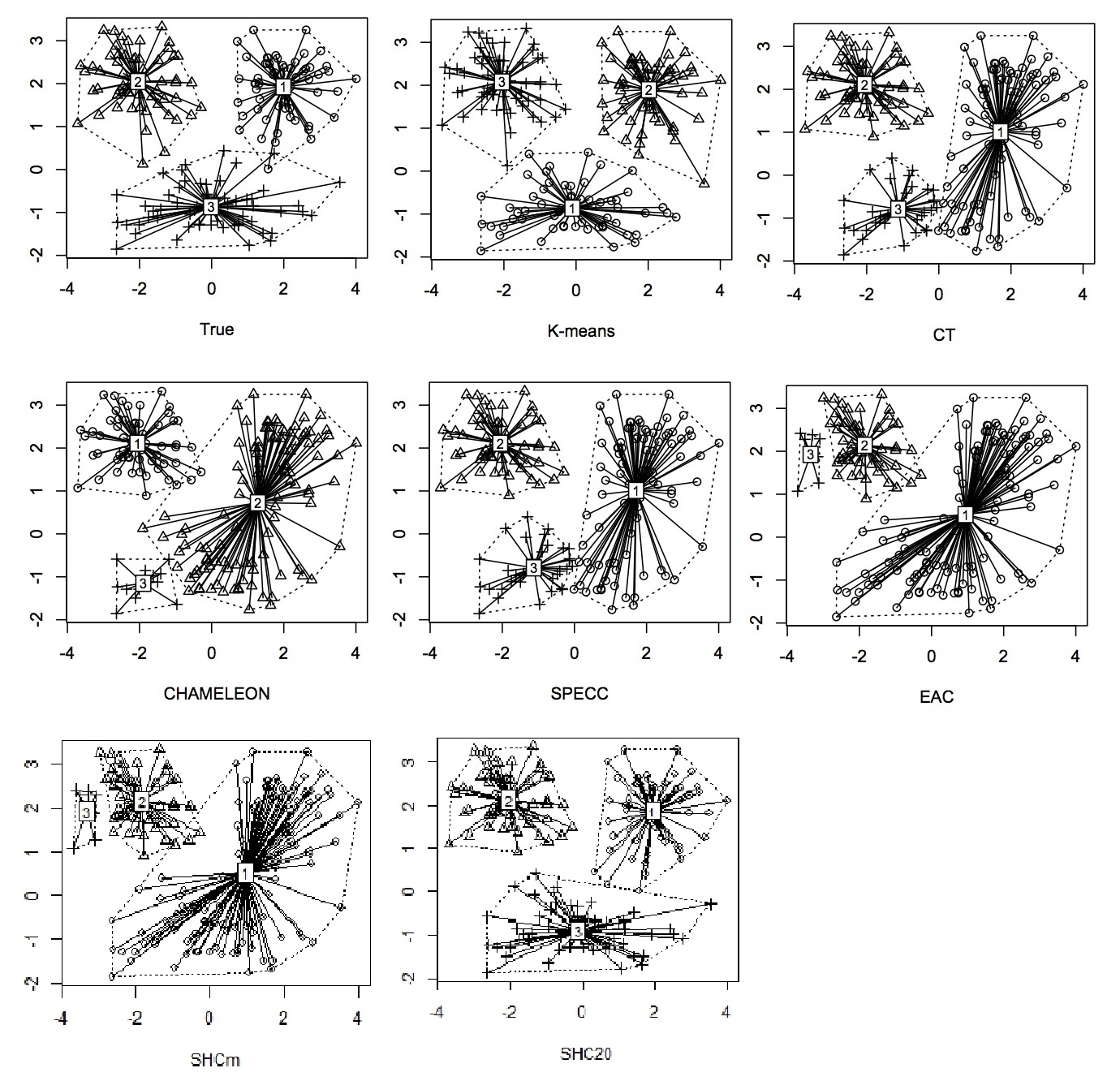} \caption{The true clustering of one run of {\sf 3-NORMALS}
data and seven estimated clusterings using  {\sf K-m}, {\sf CT}, {\sf CHA}, {\sf SPECC}, 
{\sf EAC}, {\sf SHCm}, and {\sf SHC20}. }
\label{cov1}
\end{figure}

These observations are mostly but not thoroughly consistent with Table \ref{tabcon}
in which MAI values are rounded to two decimal places and SAI values are rounded
to three decimal places.
Table \ref{tabcon} is the summary of the performance of
the six methods over 200 simulated data sets; {\sf CHA} is omitted as noted earlier.   
Clearly,  despite Fig. \ref{cov1}, {\sf CT} and {\sf EAC} are poor in an average sense (MAI) while
{\sf SCHm} is comparable to {\sf SHC20} -- suggesting the single example of {\sf SHCm}
 in Fig. \ref{cov1}
is unrepresentative of its general performance.
The performance of spectral clustering is better than Fig. \ref{cov1} suggests
but not as good as $K$-means which is best as expected.   
Loosely, {\sf K-m}, {\sf SPECC}. and the two {\sf SHC} techniques
seem to be broadly equivalent.  Note that, usually, the size of the SAI values are
higher for poorer performing clustering techniques.  This can be taken as an 
assessment of the quality of the clustering.

\begin{table}[h]
\centering
   \caption{The comparison of the proposed methods on {\sf 3-NORMALS} data}
\begin{tabular}{lcccccc} \hline
&\multicolumn{6}{c}{method}
\\ \cline{2-7} 
          & K-m  &   CT        &   SPECC     &   EAC     &   SHCm &SHC20\\ \hline
MAI   &.97         &   0.87    &    0.94      &   0.84    &0.92 &0.93  \\
SAI    &0.000       &   0.031  &  0.114     &  0.158    & 0.126 & 0.118\\
\hline
\end{tabular}\\  \label{tabcon} 
\end{table}

\subsection{Non-convex examples}
\label{nonconvex}

Here we compare all seven clustering techniques for the {\sf AGGREGATION}, {\sf SPIRAL},  {\sf HALF-RING}, and
{\sf FLAME} data sets.

\subsubsection{ {\sf AGGREGATION} data}
\label{aggregation}

The {\sf AGGREGATION} data, depicted in the top panel of Figure \ref{aggr}, is used in  \cite{Gionisetal} to show 
the performance of ensembling.   

If $K_T=7$ is used, the clusterings from the best three methods ({\sf CHAMELEON}, {\sf EAC}, and  {\sf SHC20})
are shown in the lower panels of Fig. \ref{aggr}.   It is seen that {\sf EAC} is a little worse than {\sf CHAMELEON},
and {\sf SHC20} because it merges clusters to form cluster 2 and divides a cluster to give clusters 5 and 6.  
{\sf CHAMELEON}, being graph theoretic, is better at separating the two clusters while
{\sf SHC20} includes randomness and therefore separates the two clusters slightly differently over different runs.
{\sf EAC} also includes randomness so the panels in Fig. \ref{aggr} merely show one run.
 The result of {\sf SHCm} is also shown and is noticably worse:  {\sf SHCm} merges two 
clusters inappropriately
and splits another cluster inappropriately into three clusters.  This is the only case
among the examples here in which {\sf SHCm} and {\sf SCH20} give 
meaningfully different results, apparently because of the
extreme points in the upper left cluster; note that assumption 4) in
Theorem 1 is violated.

\begin{figure} \centering
\includegraphics[width=.52\textwidth]{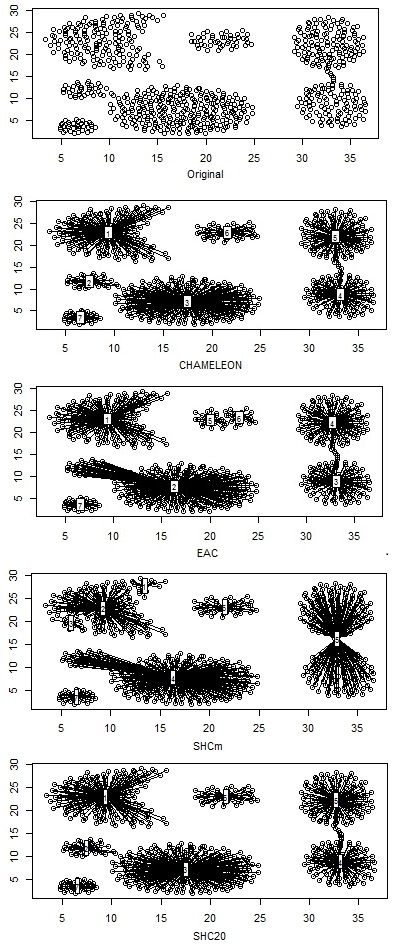} \caption{Top panel:  Original {\sf AGGREGATION}
data.  Second panel:  Clustering under {\sf CHAMELEON}.  Third panel: Clustering under {\sf EAC}.
Fourth and fifth panels: Clustering under {\sf SHCm} and {\sf SHC20} respectively}.  
\label{aggr}
\end{figure}
   
These general appearances are consistent with the results in Table \ref{tabagg}.  Note that 
{\sf CT} and {\sf CHAMELEON} have SAI zero because they are one-pass methods.  The overall inference is
that {\sf SHC20} and {\sf CHAMELEON} are essentially equivalent in this example.
   
   \begin{table}[h]
   \centering \scriptsize
      \caption{The comparison of the proposed methods on {\sf AGGREGATION} data}    
  \begin{tabular}{lccccccc} \hline
  &\multicolumn{7}{c}{method}
  \\ \cline{2-8} 
            & K-m  &   CT        &   SPECC    & {\sf CHA}&   EAC     &   SHCm&SHC20\\ \hline
  MAI   &     0.83  &  0.81     &  0.92   & 1      &0.95 &0.84  & 0.98 \\
  SAI    &     0.000  & 0      & 0.044  &0    &   0.000&0.000  & 0.044\\
  \hline
  \end{tabular}\\  \label{tabagg} 
  \end{table}

\subsubsection{ {\sf SPIRAL} data}

Figure \ref{spiral} shows the {\sf SPIRAL} data that are often considered as a test case for nonconvex clustering.
Clearly, $K_T=3$ and the clusters are the three lines of points.  In this run, only {\sf EAC}, {\sf SHCm} 
and {\sf SHC20}
find the correct clusters.  More generally, if one uses many runs, the  MAI and SAI present a slightly different
result:  The best methods are {\sf SPECC} and the two {\sf SHC}'s.
Of these, {\sf SHC20} and {\sf SHCm} should be preferred because {\sf SPECC} has a 
higher SAI, as can be seen in Table \ref{tabspiral}.
    
  \begin{figure} [htp!]
 \centering
  \includegraphics[width=.5\textwidth]{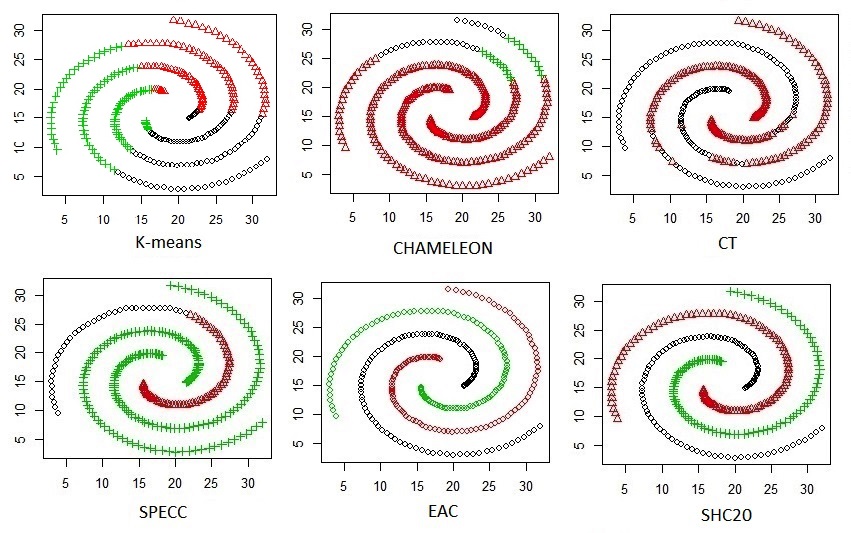} \caption{Clustering of the {\sf SPIRAL}  data with six 
different methods as indicated on the panels; SHCm and SCH20 are idnetical
so only SHC20 is shown.}
  \label{spiral}
  \end{figure}

\begin{table}[h]
\centering \scriptsize
   \caption{The comparison of the proposed methods on {\sf SPIRAL} data}
\begin{tabular}{lccccccc} \hline
&\multicolumn{7}{c}{method}
\\ \cline{2-8} 
          & K-m  &   CT   &  {\sf CHA}   &   SPECC     &   EAC     &   SHCm&   SHC20\\ \hline
MAI   &   0.34      &   0.35&0.44    &  0.94        &   0.90    & 1&1  \\
SAI    & 0.000      &     0&0&0.148        & 0.134    & 0.000&0.000 \\
\hline
\end{tabular}\\  \label{tabspiral} 
\end{table}

\subsubsection{ {\sf HALF-RING} data}
\label{halfring}

Figure \ref{bowl} shows six clusterings of the {\sf HALF-RING} data, considered in \cite{JainandLaw} ({\sf SHCm} and {\sf SHC20} are nearly identical).  
Intuitively, there are two clusters but the density of the points makes it ambiguous whether the top 
half-ring should be split into two clusters or not.   It can be seen that {\sf K-m} and {\sf CT} give poor performance (they merge the left half of the bottom half ring to the top half ring)
but the other methods find the two clusters;
this is seen in the results in Table \ref{tabhalf}.
Note that when {\sf SHCm} and {\sf SHC20} are essentialy the same, we only show one of them.

While {\sf SHCm} and {\sf SHC20} have slightly lower MAI's than the other three good methods ({\sf CHAMELEON}, {\sf SPECC} and {\sf EAC}) they
also have nonzero SAI's indicating the ambiguity in the top half-ring.  
Indeed, when we ran {\sf SHC20} 1000 times
on the {\sf HALF-RING} data, the two half rings were in separate clusters 873 times while the right hand portion of the
top half-ring was put in the same cluster as the bottom ring 127 times.  The ambiguity of two versus three clusters
being appropriate is recognized by the {\sf SHC}'s whereas the SAI's of the other 
three good methods 
({\sf CHAMELEON}, {\sf SPECC} and {\sf EAC}) being zero indicates they are not recognizing the ambiguity. 

    \begin{figure}  \centering
    \includegraphics[width=.5\textwidth]{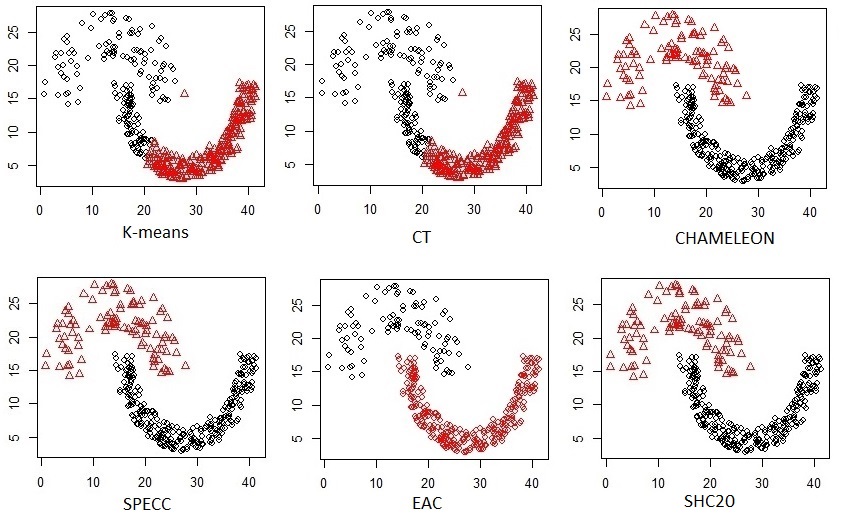} \caption{Clustering of the  {\sf HALF-RING} data with six different 
methods as indicated on the panels; SHCm is the same as SHC20.}
    \label{bowl}
    \end{figure}

 \begin{table}[h!]
 \centering\scriptsize
    \caption{The comparison of the proposed methods on {\sf HALF-RING} data}
 \begin{tabular}{lccccccc} \hline
 &\multicolumn{7}{c}{method}
 \\ \cline{2-8} 
           & K-m  &   CT   &{\sf CHA}      &   SPECC     &   EAC     &   SHCm&SHC20\\ \hline
 MAI   & 0.78      & 0.78  &1    &   1      &   1    &0.99 &0.97 \\
 SAI    & 0      &  0   &  0&0     &   0   &0.044& 0.063 \\
 \hline
 \end{tabular}\\  \label{tabhalf} 
 \end{table}

\subsubsection{ {\sf FLAME} data}
\label{flamedata}

Fu and Medico \cite{FuandMedico} developed a fuzzy clustering technique
for DNA micro-array wich they considered on the test
data given in Figure \ref{flame}.  The website \cite{Webref} refers to this as the {\sf FLAME}
data set.  On this data set, {\sf SHCm} and  {\sf SHC20} are seen to be the methods that best
 identify the two clusters.
None of the other five methods do as well; {\sf CHAMELEON}, {\sf EAC}, and {\sf SPECC} fail completely
because they are greatly distorted by the two apparent outliers.  By contrast, {\sf SHCm}
and {\sf SHC20} deal elaborately with outliers to reduce their effect.  K-m and CT do passably well, but put too many points in the upper cluster.

The overall performance of the methods is summarized in Table \ref{tabflame} indicating 
high MAI and relatively low SAI consistent with Fig. \ref{flame}.

\begin{figure}[htp!]  \centering
  \includegraphics[width=.5\textwidth]{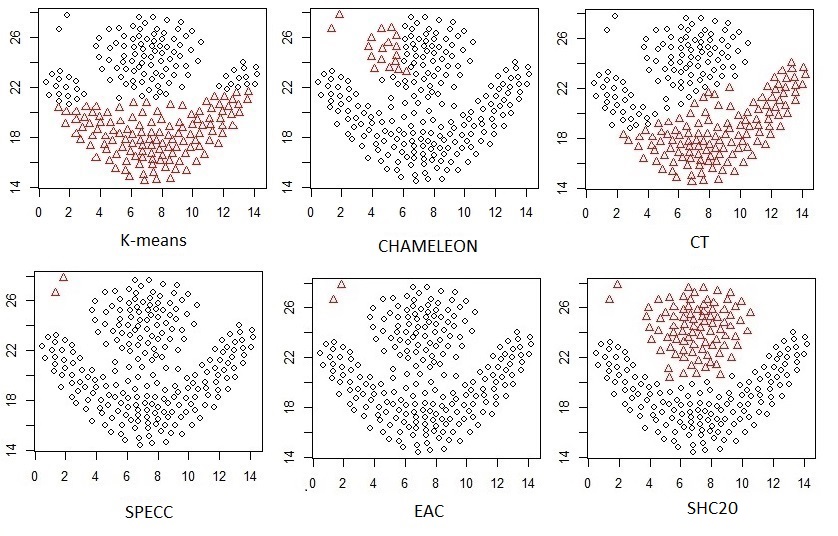} \caption{Clustering of {\sf FLAME} data with six
different methods as indicated on the panels (SHCm and SHC20 are identical).}
  \label{flame}
  \end{figure}
  
\begin{table}[]
\centering \scriptsize
   \caption{The comparison of the proposed methods on {\sf FLAME} data}
\begin{tabular}{lccccccc} \hline
&\multicolumn{7}{c}{method}
\\ \cline{2-8} 
          & K-m  &   CT        & {\sf CHA}&  SPECC     &   EAC     &   SHCm&SHC20\\ \hline
MAI   &       0.84  & 0.84      & 0.71& 0.7         &0.65      &  0.89&0.88\\
SAI    &       0.031&0     &0  &0.122       & 0.0000     & 0.000&0.000 \\
\hline
\end{tabular}\\  \label{tabflame} 
\end{table}

\subsection{Higher dimensional data}
\label{higherdims}

In this context it is difficult to generate two or three 
dimensional plots to evaluate how successful a clustering strategy is visually.
One can use projections onto planes, however, doing this systematically 
increases in difficulty as the dimension
increases.  Hence, we only present tables of MAI and SAI values.     As a generality, {\sf CHA} 
can only be
expected to work well when the clusters are compact and can be separated; this is less and less 
likely as dimension increases and the cases when it does occur tend to be
those where $K$-means performs well and is easier to use.

\subsubsection{ {\sf IRIS} data}

This benchmark data set contains $n=$150 observations with three attributes of Iris flowers. 
Table \ref{tabiris} gives the MAI's and SAI's for the  replications of the methods on the 
{\sf IRIS} data. Obviously, {\sf K-m}, {\sf CT} and the two {\sf SHC}'s methods 
provide more accurate 
clustering than the other methods for this data.   
This is the only one of our examples
where {\sf CT} works well.

\begin{table}[h]
\centering \scriptsize
   \caption{The comparison of the proposed method on the {\sf IRIS} data}
\begin{tabular}{lccccccc} \hline
&\multicolumn{7}{c}{method}
\\ \cline{2-8} 
&K-m&CT&{\sf CHA}&SPECC&EAC&   SHCm&SHC20\\ \hline
MAI&0.89&0.88& 0.73&0.69&0.69&0.89&0.88\\
SAI&0.000&-&-&0.044&0.000&0.054&0.063\\
\hline
\end{tabular}\\  \label{tabiris} 
\end{table}

\subsubsection{ {\sf GARBER} data}

To study the performance of the {\sf SHC}'s with high dimensional data, 
we used the microarray data from \cite{Garberetal}. 
The data are the 916-dimensional gene expression profiles for lung tissue from $n=72$ subjects.  
Of these, five subjects were
normal and  67 had lung tumors.  The classification of the tumors into 6 classes (plus normal)
was done by a pathologist giving seven classes total.  Accordingly, we expect seven clusters.  
The data set was constructed in \cite{Garberetal} by filling in missing values 
estimated by the means within the same gene profiles. 
Table \ref{tabgarber} presents the results.  

\begin{table}[h]
\centering \scriptsize
   \caption{The comparison of the proposed method on the {\sf GARBER} data}
\begin{tabular}{lccccccc} \hline
\\&\multicolumn{7}{c}{method}
\\ \cline{2-8} 
data&K-m&CT&{\sf CHA}&SPECC&EAC&SHCm&SHC20\\ \hline
MAI&0.70&0.63&0.54&0.71&0.80&0.82&0.82\\
SAI&0.109&-&-&0.054&0.000&0.000&0.000\\
\hline
\end{tabular}\\  \label{tabgarber} 
\end{table}

\subsubsection{ {\sf WINE QUALITY} data}

The {\sf WINE QUALITY} data is used in Cortez et al. \cite{Cortezetal} to study the  classification of wines.
For red wines, $n=4898$ and seven clusters were found.  For white wine, $n=1599$ and 6 clusters were found.
Both data sets (for red and white wine) have 11 attributes.  Since the data sets are large, we drew $n=300$
observations randomly from each of them.   
Table \ref{tabwine} gives the MAI's and SAI's we found.  In both cases, the best methods
were the two {\sf SHC}'s and {\sf EAC} with the {\sf SHC}'s being slightly better.  
The other three methods were worse
and even {\sf EAC} and the {\sf SHC}'s 
could not be said to perform well except in a relative sense.

 Note that we omitted {\sf CHA} from
this example since it was difficult to use and, as was seen,
{\sf CHA} did poorly
on the first two of these higher dimensional examples
and {\sf K-m} was included.  We also omitted {\sf CT} in this example
since it never did well (except on the {\sf IRIS} data, where three other methods
did as well).
 
 \begin{table}[h]
 \centering
    \caption{The comparison of the proposed method on the {\sf WINE QUALITY} data}
 \begin{tabular}{clcccc} \hline
\multicolumn{6}{c}{red wine} 
\\\hline
&\multicolumn{5}{c}{methods}
 \\ \cline{2-6} 
 data&K-m&SPECC&EAC&   SHCm&SHC20\\ \hline
MAI&0.29& 0.4& 0.48&0.47& 0.48 \\
SAI&0.031&0.070&0.028&0.031&0.031\\
 \hline
\multicolumn{6}{c}{white wine} 
\\\hline
&\multicolumn{5}{c}{methods}
 \\ \cline{2-6} 
 data&K-means&SPECC&EAC&  SHCm&SHC20\\ \hline
  MAI&0.31& 0.38& 0.44& 0.46&0.46\\
  SAI&0.031& 0.0547& 0.002& 0.044&0.031 \\
  \hline
 \end{tabular}\\  \label{tabwine}
 \end{table} 

\subsubsection{Summary of the examples}

From these examples, it is seen that {\sf CT} rarely performs well (only on {\sf IRIS}) and so can be neglected.
{\sf CHA } works well only on two examples, {\sf HALF-RING} and {\sf AGGREGATION}, 
and its performance is
never meaningfully better than both {\sf SHC}'s.  
Likewise, {\sf SPECC} and {\sf EAC} are never meaningfully 
better than both {\sf SHC}'s.    
Finally, {\sf K-m} only performs really well on {\sf 3-NORMALS}, a setting for which
it was designed (and even there has close competitors like {\sf SHC20} and {\sf SPECC}), 
and {\sf IRIS} (although only trivially).  The inference from this is that, 
as a generality, one of {\sf SHCm} and
{\sf SHC20} is the best method.  
The only meaningful difference in our examples
occurred for {\sf AGGREGATION} where {\sf SHC20} outperformed {\sf SHCm}.
Thus, although our theoretical results support {\sf SHCm}, we are led to {\sf SHC20} as a default.

\subsection{Estimating cluster size}
\label{estK}

Estimating the number of clusters is challenging because it requires 
knowing the structure of the underlying population something that is often not known.  That is, identifying the number of groups in a data set
is a problem that is both physically and mathematically challenging. 
Nevertheless, there are several methods to estimate the number of clusters, $\widehat K_{T}$.
For instance, \cite{TibshiraniWaltherandHastie} uses the gap statistic (gap) and it is implemented in the
 \textsf{cluster} package in \textsf{R}.  Another popular method is the Silhouette distance (sil),
 \cite{KaufmanandRousseeuw}, implemented in the \textsf{fpc} package in \textsf{R}.
In addition, one can estimate $K_T$ using the 
Bayesian information criterion (BIC), intializing by a
hierarchical clustering as in implemented, for instance, in {\sf mclust}  in \textsf{R}, see \cite{Frale2007}.

Table \ref{clustersize} shows the estimates of the number of clusters and the
true number of clusters
using six different methods
for the data sets in the previous sub-sections.   The numbers in parentheses are the standard
deviations (SD's) for the estimates.  The bottom row is the absolute error (AE)
formed by taking the sum of the absolute differences between the true and the estimated
number of clusters.
A simple glance at the results shows that if the
raw numbers are rounded to their nearest integers then even though the
{\sf EK} methods based on {\sf SHCm} and {\sf SHC20} do not always identify the correct number of clusters, all other methods perform worse (for the
data sets considered).
Indeed, gap statistic, sil, and BIC do noticably worse.
Only {\sf EAC} is comparable, and it is slightly worse than {\sf EKm}
which is slightly worse than {\sf EK20},
again validating our recommendation of using the 20th percentile, i.e., {\sf EK20}, as a good default.
Unfortunately, the SD's do not seem to provide a helpful guide as to which methods are good;
poor methods can have small SD's and better methods can have larger SD's unlike
for the clustering methods.

 \begin{table}[h]
 \centering
\caption{Estimated numbers of clusters and their SD's using six techniques.}
    {\tiny
 \begin{tabular}{lccccccc} \hline
 &&\multicolumn{6}{c}{methods} 
 \\ \cline{3-8}  data set&actual &EKm&EK20&EAC&Gap&Sil&BIC\\ \hline
 {\sf FLAME} &           2&    2.2(0.3)      &2.1(0.2)       &2.1(0.3)      &2.7(0.8)    &4(0.0)       &4(0.0)\\
{\sf SPIRAL}&             3&    3(0.0)         &3(0.0)          &2.7(1.0)      &7.9(1.4)    &2(0)          &6(0.0)\\
{\sf HALF-RING}&     2&     2.1(0.3)     &2.0( 0.1)      &2.0(0.4)       &1(0)          &20(0)        &20(0.0)\\
{\sf AGGREGATION}&7&     5(0)              &5(0)            &5(0)           &2.3(1)       &4(0)          &10(0)\\
{\sf 3-NORMAL} &    3&     3.4(1.7)      &3.4(1.6)      &3.3(2.3)      &2.4(0.9)     &3.1(0.3)  &3.1(0.4)\\ 
 {\sf IRIS}&                3&     3.0(0.0)      &2.9(0.2)      &2.0(0.3)      &3.4(1.3)     &2(0)          &2(0)\\
 {\sf Red wine}&        6&     3.2(1.6)     &3.2(1.7)       &3.6(3.8)      &8.5(2.8)     &2(0)          &12.0(3.0)\\  
{\sf White wine}&      7&     3.5(2)        &3.8(2.1)        &3.8(2.8)     &10.9(4.4)   &2.4(1.3)  &12.4(4.0)\\  
{\sf GARBER}  &         6&     4.2(1.5)      &4.6(2.2)       &12(10.9)    &5.7(2.5)     &2(0)          &5(0)\\ \hline
\hline
{\sf AE}  &                        &10.8           &  10              &15.3          &      19&35.7&39.5 \\

\hline
 \end{tabular}\\  
\label{clustersize}}
 \end{table} 
 
\section{Conclusion}
\label{conclusions}


Our approach leads to two natural techniques that differ in the dis-similarity
used in the single linkage step of our clustering approach.  One dis-similiarity 
is the usual Euclidean distance between two points.  The other is the 20-th percentile
of the distances between points in two different clusters.   
We can establish formal results for the Euclidean distance 
and it has a naural geometric interpretation.  However, the 20-th percentile dissimilarity
gives performance that is no worse and sometimes meaningfully better than
the Euclidean distance.  

In order to evalaute the performance of the proposed methods, we tested them 
on a wide variety of qualitatively different clusterings, including both real and simulated data,
convex and nonconvex true clusterings, and clusterings
in which the components are not well separated.
Our theory and examples suggest that our methods 
lead to accurate clusterings and that using
$B \approx 250$ to form the membership matrices is enough to provide 
satisfactory stability.
Of course, for more complicated data -- irregular shapes, little separation between
shapes, etc. --  more ensembling 
(higher $B$) may lead to better results.


In our examples, our methods effectively equal or outperform many standard
or related
methods such as spectral clustering, $K$-means, {\sf EAC} 
\cite{FredandJain}, hybrid hierachical clustering
\cite{ChipmanandTibshirani}, and {\sf CHAMELEON} \cite{KarypisHanandKumar}.
In fact, in all eight examples we presented here, one of the two
forms of our approach always yielded robust and relatively accurate results.



\section*{Acknowledgment}

The authors gratefully acknowledge support from  the NSF-DTRA, grant No. NR66853W.

\end{document}